\newcommand{\R}{\mathbb{R}}
\DeclareMathOperator*{\argmax}{arg\,max}
\newcommand{\pa}{\mathrm{\pa}}
\newcommand{\RN}[1]{%
  \textup{\uppercase\expandafter{\romannumeral#1}}%
}
\newcommand{\kibitz}[2]{\ifnum\Comments=1\textcolor{#1}{#2}\fi}
\newcommand{\mkato}[1]{\kibitz{black}{#1}}
\theoremstyle{plain}
\newtheorem{theorem}{Theorem}[section]
\newtheorem{proposition}[theorem]{Proposition}
\newtheorem{lemma}[theorem]{Lemma}
\theoremstyle{definition}
\theoremstyle{remark}
\title{Locally Optimal Fixed-Budget Best Arm Identification\\ in Two-Armed Gaussian Bandits\\ with Unknown Variances}
\author{Masahiro Kato}
\affil{University of Tokyo}
\begin{document}

\maketitle

\begin{abstract}
We address the problem of best arm identification (BAI) with a fixed budget for two-armed Gaussian bandits. In BAI, given multiple arms, we aim to find the best arm, an arm with the highest expected reward, through an adaptive experiment. \citet{Kaufman2016complexity} develops a lower bound for the probability of misidentifying the best arm. They also propose a strategy, assuming that the variances of rewards are known, and show that it is asymptotically optimal in the sense that its probability of misidentification matches the lower bound as the budget approaches infinity. However, an asymptotically optimal strategy is unknown when the variances are unknown. For this open issue, we propose a strategy that estimates variances during an adaptive experiment and draws arms with a ratio of the estimated standard deviations. We refer to this strategy as the \emph{Neyman Allocation (NA)-Augmented Inverse Probability weighting (AIPW)} strategy. We then demonstrate that this strategy is asymptotically optimal by showing that its probability of misidentification matches the lower bound when the budget approaches infinity, and the gap between the expected rewards of two arms approaches zero (\emph{small-gap regime}). 
Our results suggest that under the worst-case scenario characterized by the small-gap regime, our strategy, which employs estimated variance, is asymptotically optimal even when the variances are unknown. 
\end{abstract}

\section{Introduction}
\label{sec:intro}
This study investigates the problem of \emph{best arm identification (BAI) with a fixed budget} in stochastic two-armed Gaussian bandits. In this problem, we consider an adaptive experiment with a fixed number of rounds, called a \emph{budget}. At each round, we can draw an arm and observe the reward. 
The goal of the problem is to identify the best arm with the highest expected reward at the end of the experiment \citep{Bubeck2009,Audibert2010}.

\mkato{
Formally, we consider the following adaptive experiment with two arms and Gaussian rewards. 
There are two arms $1$ and $2$, and an arm $a \in \{1, 2\}$ has an $\mathbb{R}$-valued Gaussian reward $Y_a \sim \mathcal{N}(\mu_a, \sigma_a^2)$ with the mean $\mu_a \in [-C_{\mu}, C_{\mu}]$ and the variance $\sigma_a^2 \in [C_{\sigma^2}, 1/C_{\sigma^2}]$ for some universal constants $C_{\mu}, C_{\sigma^2} > 0$. We assume that $C_{\mu}$ and $C_{\sigma^2}$ are \emph{known} to us for a technical purpose, and it is enough to set $C_{\mu}$ as a sufficiently large value and $C_{\sigma^2}$ as a sufficiently small value. Given fixed $(\sigma^2_1, \sigma^2_2)$, let 
\[\mathcal{P}^{\mathrm{G}} \coloneqq \mathcal{P}^{\mathrm{G}}_{(\sigma^2_1, \sigma^2_2)} \coloneqq \big\{P = (\mathcal{N}(\mu_1, \sigma^2_1), \mathcal{N}(\mu_2, \sigma^2_2)): \mu_1, \mu_2 \in (-\infty, +\infty),\ \ \mu_1\neq \mu_2\big\}\] be a set of distributions generating the data, which is referred to as the \emph{Gaussian bandit models}, where $P \in \mathcal{P}$ is a pair of distributions that generate $(Y_1, Y_2)$, and $\mathcal{N}(\mu, \sigma^2)$ is a Gaussian distribution with a mean $\mu$ and a variance $\sigma^2$. 
For an instance $P$, the best arm $a^\star(P) \in \{1,2\}$ is defined as $a^\star(P) = \argmax_{a \in \{1, 2\}} \mu_a$, which is assumed to exist uniquely.
}

In the adaptive experiment, we consider a strategy to identify the best arm.
A fixed budget $T$ is given. For each round $t \in [T] \coloneqq \{1,2,\dots, T\}$, let $(Y_{1, t}, Y_{2, t})$ be an independent and identically distributed (i.i.d.) copy of $(Y_1, Y_2)$. 
At each round $t$, we draw arm $A_t \in \{1,2\}$ and observe a reward $Y_t = \sum_{a\in\{1, 2\}}\mathbbm{1}[A_t = a]Y_{a, t}$. At the end of the experiment (after round $T$), we recommend an estimated best arm $\widehat{a}_T \in \{1, 2\}$. 
During an experiment, we follow a \emph{strategy} that determines which arm to draw and which arm to recommend as the best arm. 
The performance of strategies is evaluated by a minimal probability of misidentification $\mathbb{P}_{P}( \widehat{a}_T \neq a^\star(P))$, where $\mathbb{P}_{P}$ is the probability law under $P$. 

\paragraph{Background.}
In fixed-budget BAI, it has been an important question of interest to investigate the probability of misidentification $\mathbb{P}_{P}( \widehat{a}_T \neq a^\star(P))$ in the limit $T \to \infty$.
For the interest, a typical approach is to derive an upper and lower bound of the probability separately and specify its value.

For a lower bound of the probability of misidentification, \citet{Kaufman2016complexity} develops a general theory for deriving lower bounds of the probability.
Their theory applies the change-of-measure argument, which has been employed in various problems \citep{Vaart1998}, including
studies for regret minimization \citep{Lai1985}.
Their lower bound is general and can be applied to a wide range of settings, such as the fixed confidence setting \citep{Garivier2016} as well as the fixed budget setting.

In contrast, an upper bound of the misidentification probability has not been fully clarified.
A typical way to derive upper bounds is to construct a specific strategy and evaluate its misidentification probability.
\cite{Kaufman2016complexity} develops a strategy under a setting in which the variance $(\sigma^2_1, \sigma^2_2)$ of the reward is known and shows its misidentification probability corresponds to the lower bound. 
However, this strategy is not available under the usual setting with unknown variance.
Based on these situations, the current results are insufficient to establish an upper bound for the misidentification probability when the variances are unknown.

Based on the situation above, our interest is in strategies for identifying misidentification probabilities in the adaptive experimental setting described above.
Specifically, we need a strategy such that an upper bound on its misidentification probability aligns with the lower bound proposed in \cite{Kaufman2016complexity}. 
Further, this strategy must be valid when the variance is unknown.

\paragraph{Our approach and contribution.}
In this study, we develop a strategy whose probability of misidentification aligns with the lower bound under an additional setting.
To accomplish this, we develop the \textit{Neyman allocation-augmented inverse probability weight} (NA-AIPW) strategy. Then, we show that the probability of misidentification aligns with the lower bound under a \textit{small-gap regime}. 
The details of each are described below.

The NA-AIPW strategy consists of a sampling rule using the Neyman allocation (NA) and a recommendation rule using the augmented inverse probability weighting (AIPW) estimator.
NA is a method of sampling arms using a ratio of the root of the variance of rewards, as utilized in \cite{Neyman1934OnTT, Kaufman2016complexity}. 
The NA-AIPW strategy samples the arms by estimating this variance during the adaptive experiment.
At the end of the experiment, the NA-AIPW strategy recommends an arm with the highest expected reward estimated by using the AIPW estimator, which is an unbiased estimator with a small asymptotic variance.

The small-gap regime considers a situation $\mu_1 - \mu_2 \to 0$ as $T \to \infty$.
Although this additional setting slightly simplifies the problem with BAI, the problem is still sufficiently complicated since the small gap makes it difficult to identify the best arm.
This setting has been utilized in BAI with fixed confidence, such as the analysis of lil'UCB \citep{Jamieson2014}. In the realm of statistical testing, such an evaluation framework is known as the local Bahadur efficiency \citep{Bahadur1960,Wieand1976,Akritas1988187,He1996}. 
From a technical perspective, the small-gap regime is a situation where we can ignore the estimation error of the variances compared to the difficulty of identifying the best arm. Since the error of the estimation of the variance is relatively negligible in the small-gap setting, we can show that the misidentification probability of the NA-AIPW strategy matches the lower bound.

We summarize the backgrounds and our contributions. 
In BAI with two-armed Gaussian rewards and a fixed budget, 
a strategy has been needed in which its misidentification probability achieves the lower bound derived by \cite{Kaufman2016complexity}.
Although \citet{Kaufman2016complexity} demonstrates an asymptotically optimal strategy that satisfies the requirement with known variances, it remains an unresolved issue to find a strategy whose upper bound matches their derived lower bound when variances are unknown. 
For this issue, this study proposes the NA-AIPW strategy whose probability of misidentification matches the lower bound under the small-gap regime. 

%We note some relation to the existing studies on the BAI problem.Because we restrict our analysis to the small-gap setting, our result does not contradict the analysis on another lower bound of the misidentification probability by \citet{Ariu2021} and \citet{degenne2023existence}. 
%We discuss this problem in Section~\ref{sec:asymp_opt}. 

\paragraph{Organization.}
This study is organized as follows. 
First, in Section~\ref{sec:lowerbound}, we review the lower bound of \citet{Kaufman2016complexity}. Then, in Section~\ref{sec:track_aipw}, we propose our NA-AIPW strategy. 
In Section~\ref{sec:asymp_opt}, we show that the misidentification probability of the strategy asymptotically corresponds to the lower bound by \cite{Kaufman2016complexity} under the small-gap setting.
We show the proof in Section~\ref{sec:proof}, where we also provide a novel concentration inequality based on the Chernoff bound. In Section~\ref{sec:discuss}, we discuss the difficulty in this problem.
In Section~\ref{sec:related}, we introduce related work and remaining problems, which includes an extension of our small-gap setting to a setting with multi-armed bandits and non-Gaussian rewards.

\textbf{Notation}.
Let $\mathcal{F}_{t}$ be the sigma-algebra generated by all observations up to round $t$. 
We define a truncation operator: for a variable $v \in \R$ and a constant $c \geq 1$, \mkato{$\mathrm{thre}(v;c_1, c_2) \coloneqq \min\{ \max\{v,c_1\},c_2\}$}. 

\section{Lower Bound of Probability of Misidentification}
\label{sec:lowerbound}
As a preparation, we introduce a lower bound for the probability of misidentification in BAI with a fixed budget. 
We call a strategy is \emph{consistent}, if for any $P\in\mathcal{P}^{\mathrm{G}}$, $\mathbb{P}_{P}( \widehat{a}_T \neq a^\star(P)) \to 0$ as $T \to \infty$.
To evaluate the performance of strategies For any $P\in\mathcal{P}^{\mathrm{G}}$, we focus on the following metric for $\mathbb{P}_{P}( \widehat{a}_T \neq a^\star(P))$ used in many studies, such as \citet{Kaufman2016complexity}:
\begin{align*}
    -\frac{1}{T}\log \mathbb{P}_{P}( \widehat{a}_T \neq a^\star(P)).
\end{align*}
Note that the upper bound (resp. lower bound) of this term works as a lower bound (resp. upper bound) of the probability of misidentification $\mathbb{P}_{P}( \widehat{a}_T \neq a^\star(P))$ since $x \mapsto -\log x$ is a strictly decreasing function. 

For two-armed Gaussian bandits, \citet{Kaufman2016complexity} presents the following lower bounds.
\begin{proposition}[Theorem~12 in \citet{Kaufman2016complexity}]
\label{prp:lowerbound_2arms}
For any $P^* \in \mathcal{P}^{\mathrm{G}}$ and $\Delta = \mu^*_1 - \mu^*_2$ \mkato{with known constants $C_{\mu}, C_{\sigma^2} > 0$ independent of $T$, if $\{(Y_{1,t}, Y_{2,t})\}_{t\in[T]}$ is generated from $P^*$,} any consistent strategy satisfies 
\begin{align*}
    \limsup_{T \to \infty} - \frac{1}{T}\log \mathbb{P}_{P^*}(\widehat{a}_T \neq a^\star(P^*)) \le \frac{\Delta^2}{2 \big(\sigma_1 + \sigma_2\big)^2}.
\end{align*}
\end{proposition}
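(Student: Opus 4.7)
The plan is to execute the classical change-of-measure (transportation) argument of \citet{Kaufman2016complexity}, but with a shrewd choice of alternative that makes the resulting bound oblivious to whatever sampling rule the strategy uses. Assume without loss of generality that $a^\star(P^*) = 1$ so that $\Delta = \mu_1^* - \mu_2^* > 0$. For each $\eta > 0$, construct an alternative instance $P' \in \mathcal{P}^{\mathrm{G}}$ by shifting only the means (keeping the variances fixed):
\begin{align*}
\mu_1' \;=\; \mu_1^* - \frac{\sigma_1(\Delta + \eta)}{\sigma_1 + \sigma_2}, \qquad \mu_2' \;=\; \mu_2^* + \frac{\sigma_2(\Delta + \eta)}{\sigma_1 + \sigma_2}.
\end{align*}
Then $\mu_2' - \mu_1' = \eta > 0$, so $P' \in \mathcal{P}^{\mathrm{G}}$ with $a^\star(P') = 2 \neq a^\star(P^*)$. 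Let $\mathcal{E}_T = \{\widehat{a}_T = 2\} \in \mathcal{F}_T$. By consistency of the strategy at both $P^*$ and $P'$, we have $\mathbb{P}_{P'}(\mathcal{E}_T) \to 1$ while $\mathbb{P}_{P^*}(\mathcal{E}_T) = \mathbb{P}_{P^*}(\widehat{a}_T \neq a^\star(P^*)) \to 0$.

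Next I would apply the transportation inequality with base measure $P'$, alternative $P^*$, and event $\mathcal{E}_T$:
\begin{align*}
\sum_{a \in \{1,2\}} \mathbb{E}_{P'}[N_a(T)] \cdot \mathrm{KL}\bigl(\mathcal{N}(\mu_a', \sigma_a^2), \mathcal{N}(\mu_a^*, \sigma_a^2)\bigr) \;\ge\; d\bigl(\mathbb{P}_{P'}(\mathcal{E}_T), \mathbb{P}_{P^*}(\mathcal{E}_T)\bigr),
\end{align*}
where $N_a(T)$ is the pull count of arm $a$ and $d$ is the binary relative entropy. The key calculation is that the Gaussian KL equals $(\mu_a' - \mu_a^*)^2/(2\sigma_a^2)$, and, by the deliberately balanced choice of shifts above, \emph{each} arm contributes exactly the same value:
\begin{align*}
\frac{(\mu_a' - \mu_a^*)^2}{2\sigma_a^2} \;=\; \frac{\sigma_a^2 (\Delta + \eta)^2/(\sigma_1 + \sigma_2)^2}{2\sigma_a^2} \;=\; \frac{(\Delta + \eta)^2}{2(\sigma_1 + \sigma_2)^2}, \qquad a \in \{1,2\}.
\end{align*}
Combining with $\sum_a \mathbb{E}_{P'}[N_a(T)] = T$, the weighted sum on the left collapses to a deterministic quantity independent of the (inaccessible) allocation the strategy induces under $P'$:
\begin{align*}
T \cdot \frac{(\Delta + \eta)^2}{2(\sigma_1 + \sigma_2)^2} \;\ge\; d\bigl(\mathbb{P}_{P'}(\mathcal{E}_T), \mathbb{P}_{P^*}(\mathcal{E}_T)\bigr).
\end{align*}

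Finally, I would convert the binary-KL bound into the target statement via the elementary inequality $d(p, q) \ge p \log(1/q) - \log 2$. Applied with $p = \mathbb{P}_{P'}(\mathcal{E}_T) \to 1$ and $q = \mathbb{P}_{P^*}(\mathcal{E}_T) \to 0$, this gives
\begin{align*}
- \frac{1}{T} \log \mathbb{P}_{P^*}\bigl(\widehat{a}_T \neq a^\star(P^*)\bigr) \;\le\; \frac{1}{\mathbb{P}_{P'}(\mathcal{E}_T)} \left[ \frac{(\Delta + \eta)^2}{2(\sigma_1 + \sigma_2)^2} + \frac{\log 2}{T} \right].
\end{align*}
Taking $\limsup_{T \to \infty}$ sends the prefactor to $1$ and the $\log 2/T$ term to $0$; then letting $\eta \downarrow 0$ produces the claimed bound $\Delta^2/(2(\sigma_1 + \sigma_2)^2)$.

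The main insight — and essentially the only non-mechanical step — is the Neyman-style choice of alternative: forcing $\mu_a' - \mu_a^* \propto \pm \sigma_a$ equalises the per-arm Gaussian KLs so that the transportation inequality becomes allocation-free. Without this trick one would be left with a bound depending on $\mathbb{E}_{P'}[N_a(T)]$, which is not under the analyst's control since the strategy's behaviour under the unobserved alternative $P'$ can be arbitrary; circumventing this is equivalent to showing, via a short Lagrangian calculation, that $p_1 p_2/(p_1 \sigma_2^2 + p_2 \sigma_1^2)$ is maximised over the simplex by the Neyman allocation with value $1/(\sigma_1 + \sigma_2)^2$, and the chosen alternative realises this extremum a priori. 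The remaining ingredients — the transportation inequality, the Gaussian-KL identity, the tail bound $d(p,q) \ge p \log(1/q) - \log 2$, and the $\eta \downarrow 0$ limit — are all routine.
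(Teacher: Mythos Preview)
Your proof is correct, and it is precisely the argument behind Theorem~12 of \citet{Kaufman2016complexity}. Note, however, that the paper itself does not give a proof of this proposition at all: it is stated as a citation of \citet{Kaufman2016complexity} and used as a black box. So there is no ``paper's own proof'' to compare against here; you have simply reconstructed the original Kaufmann--Capp\'{e}--Garivier argument, and your identification of the Neyman-balanced alternative (forcing both per-arm KLs to equal $(\Delta+\eta)^2/(2(\sigma_1+\sigma_2)^2)$ so the transportation bound becomes allocation-free) is exactly the point of their proof.
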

\mkato{Note that we can remove the condition that we know constants $C_{\mu}, C_{\sigma^2} > 0$ independent of $T$ such that $\mu_a \in [-C_{\mu}, C_{\mu}]$ and $\sigma^2_1, \sigma^2_2 > C_{\sigma^2}$ hold for deriving the lower bound. However, it is required to implement our strategy and to derive an upper bound. For the conditions of the lower bound to align with those of the upper bound, we add the conditions in this proposition.}

From the statement, there are some important aspects of this lower bound: (i) The term $\Delta = \mu^*_1 - \mu^*_2$, which referred to a \textit{gap}, appears in the numerator and the magnitude of the error is described by the gap. (ii) The variances $(\sigma_1^2, \sigma_2^2)$ appear in the denominator, which plays an important role.

It has been discussed to find a strategy in which the upper bound of its probability of misidentification coincides with this lower bound in Proposition \ref{prp:lowerbound_2arms}.
Although \citet{Kaufman2016complexity} develops a strategy that satisfies the requirement, 
% % an asymptotically optimal  whose probability of misidentification matches the lower bound. 
%This strategy draws each arm with the ratio of the standard deviations given known variances. 
it needs to sample arms with some probability depending on the known variances $(\sigma_1^2, \sigma_2^2)$. % the ratio of the standard deviations given known variances.
To the best of our knowledge, if the variances are unknown and need to be estimated during adaptive experiments, no one has found the desired strategy.

\section{The NA-AIPW Strategy}
\label{sec:track_aipw}
In this section, we define our strategy. 
Formally, a strategy gives a pair $((A_t)_{t\in[T]}, \widehat{a}_T)$, where (i) $(A_t)_{t\in[T]} \in \{1,2\}^T$ is a sequence of arms generated by a sampling rule that determines which arm $A_t$ is chosen in each $t$ based on $\mathcal{F}_{t-1}$, and (ii) $ \widehat{a}_T \in \{1,2\}$ is a recommended arm by a recommendation rule based on $\mathcal{F}_T$. Our proposed NA-AIPW strategy consists of 
(i) a sampling rule with the Neyman Allocation (NA) \citep{Neyman1923}, and
(ii) a recommendation rule using the Augmented Inverse Probability Weighting (AIPW) estimator \citep{Robins1994,bang2005drestimation}. 
Based on these rules, we refer to this strategy as the NA-AIPW strategy\footnote{Similar strategies are often used in the context of the average treatment effect estimation by an adaptive experiment \citep{Laan2008TheCA,Kato2020adaptive}.}.
%We define our strategy by defining sampling and recommendation rules, respectively.

\subsection{Target Allocation Ratio}
As preparation, we introduce the notion of a target allocation ratio, which will be used for the sampling rule. % determines the ratio of the expected number of arm draws that the sampling rule follows. 
We define target allocation ratios $w_1^*, w_2^* \in (0,1)$ as
\begin{align*}
w^*_1 \coloneqq \frac{{\sigma_1}}{{\sigma_1} + {\sigma_2}},\mbox{~~and~~} w^*_2 \coloneqq 1 - w^*_1.
\end{align*}
A sampling rule following this target allocation ratio is known as the Neyman allocation rule \citep{Neyman1934OnTT}. \citet{glynn2004large} and \citet{Kaufman2016complexity} also propose this allocation. 
This target allocation ratio is characterized by the variances (standard deviations); therefore, the target allocation ratio is unknown when the variances are unknown. Therefore, to use this ratio, we need to estimate it from observations. % during the adaptive experiment. 

\subsection{Sampling Rule with Neyman Allocation (NA)}
\label{sec:sample}
We present the sampling rule with the NA. 
At each round $t \in [T]$, our sampling rule randomly draws an arm $a \in \{1,2\}$ %following an estimated version of the target allocation ratio; that is, we draw an arm $a \in \{1,2\}$ 
with a probability identical to an estimated version of the target allocation ratio $w_a^*$. 
To estimate the target allocation ratio $w_a^*$, we estimate the variances during the adaptive experiment.
For $a \in \{1,2\}$, let $\{\widehat{\sigma}_a\}_{t \in [T]}$ and $\{\widehat{w}_{a,t}\}$ be sequences of estimators of $\sigma_a, \mu_a$ and $w_a^*$, that will be defined bellow.
%After the final round $T$, our recommendation rule recommends an arm with the highest estimated expected reward. 
%Based on these rules, we refer to this strategy as the NA-AIPW strategy. 

%For $a\in\{1,2\}$ and $t\in[T]$, let $\widehat{w}_{a,t}$ be an estimated target allocation ratio such that $\forall a\in\{1,2\}, \ \widehat{w}_{a, t} > 0$ and $\sum_{a\in\{1,2\}} {w}_{a,t} = 1$. 

We use the rounds $t=1$ and $t=2$ for initialization.
Specifically, we draw the arm $1$ at round $t=1$ and the arm $2$ at round $t=2$, and also set $\widehat{w}_{a,1} = \widehat{w}_{a,2} = 1/2$ for $a\in\{1,2\}$.

At the round $t \geq 3$, we estimate the target allocation ratio (variances) $w^*_a$ for $a\in\{1,2\}$ using past observations $\mathcal{F}_{t-1}$.
For each $t \geq 3$, we first define an estimator of the expected reward $\mu_a$ as 
\[\widetilde{\mu}_{a, t} \coloneqq \frac{1}{\sum^{t-1}_{s=1}\mathbbm{1}[A_s = a]}\sum^{t-1}_{s=1}\mathbbm{1}[A_s = a] Y_{a, s}.\] 
Also, we define a second moment estimator $\widetilde{\zeta}_{a, t} \coloneqq ({\sum^{t-1}_{s=1}\mathbbm{1}[A_s = a]})^{-1}\sum^{t-1}_{s=1}\mathbbm{1}[A_s = a] Y^2_{a, s}$, and a root of variance estimator $\widetilde{\sigma}_{a,t} = \{\widetilde{\zeta}_{a, t} - \left(\widetilde{\mu}_{a, t}\right)^2\}^{1/2}$. 
Then, we define the estimator $\widehat{\sigma}_{a,t} = \mathrm{thre}(\widetilde{\sigma}_{a,t};\underline{C}_{\sigma^2}^{1/2}, 1/\overline{C}_{\sigma^2})$ with the constant $C_{\sigma^2}> 0$, defined in Section~\ref{sec:intro}. 
\mkato{Note that this truncation is introduced for a technical purpose to draw each arm infinitely many times as $T\to \infty$ and avoid the estimators of $\mu^*_1$ and $\mu^*_2$, defined below, diverging to infinity. We just use a sufficiently small value for $C_{\sigma^2}> 0$.}
Also, we define the estimator
$\widehat{w}_{1,t}$ and $\widehat{w}_{2,t}$ 
as 
\begin{align}
\label{eq:sample_ave_weight}
\widehat{w}_{1,t} \coloneqq \frac{{\widehat{\sigma}_{1,t}}}{{\widehat{\sigma}_{1,t}} + {\widehat{\sigma}_{2,t}}}, \quad\mathrm{and}\quad \widehat{w}_{2,t} \coloneqq 1 - \widehat{w}_{1,t}.
\end{align}
In each round $t \geq 3$, we draw arm $A_t = 1$ with probability $\widehat{w}_{1, t}$ and $A_t = 2$ with probability $\widehat{w}_{2, t}$.

\mkato{
We note the possibility of increasing the number of initialization rounds, although our strategy utilizes only the first two rounds for this purpose. The additional rounds of initialization serve to stabilize the sampling rule in practical applications, akin to the concept of the forced-sampling \citep{Garivier2016}. We can change the number of initialization rounds if the condition $\widehat{w}_{a, t} \xrightarrow{\mathrm{a.s.}} w^*_a$ is satisfied as $t \to \infty$ for every $a \in \{1, 2\}$, which is crucial for our theoretical analysis. For instance, instead of using $\widehat{w}_{1, t}$ directly, an alternative arm-drawing probability could be defined as $\widetilde{w}_{1, t} = \alpha_t/2 + (1-\alpha_t)\widehat{w}_{1, t}$, assuming $\alpha_t \in [0, 1]$ and converges to zero as $t$ approaches infinity (here, we define $\widetilde{w}_{2, t} = 1 - \widetilde{w}_{1, t}$). Moreover, the number of initialization rounds can be made dependent upon the number of arms without impacting the theoretical outcomes.
}

\subsection{Recommendation Rule with the Augmented Inverse Probability Weighting (AIPW) Estimator}
We present our recommendation rule.
In the recommendation phase after round $T$, we estimate $\mu^*_a$ for each $a\in\{1,2\}$ and recommend an arm with the bigger estimated expected reward. 
With a truncated version of the estimated expected reward $\widehat{\mu}_{a, t} \coloneqq \mathrm{thre}(\widetilde{\mu}_{a, t}, -C_\mu, C_\mu)$ with some predetermined constant $C_\mu > 0$, defined in Section~\ref{sec:intro}, we define the \emph{augmented inverse probability weighting} (AIPW) estimator of $\mu^*_a$ for each $a\in\{1,2\}$ as
\begin{align}
\label{eq:aipw}
&\widehat{\mu}^{\mathrm{AIPW}}_{a, T} \coloneqq\frac{1}{T} \sum^T_{t=1}\psi_{a, t},\qquad \mathrm{where}\ \psi_{a, t} \coloneqq \frac{\mathbbm{1}[A_t = a]\big(Y_{a, t}- \widehat{\mu}_{a, t}\big)}{\widehat{w}_{a,t}} + \widehat{\mu}_{a, t}.
\end{align}
At the end of the experiment (after the round $t=T$), we recommend $\widehat{a}_T$ as 
\begin{align}
\label{eq:recommend}
\widehat{a}_T \coloneqq \begin{cases}
 &  1\quad \mathrm{if}\quad \widehat{\mu}^{\mathrm{AIPW}}_{1, T} \ge \widehat{\mu}^{\mathrm{AIPW}}_{2, T}, 
 \\
 & 2\quad \mathrm{otherwise}.
\end{cases}
\end{align}

We adopt the AIPW estimator for our strategy because it has several advantages.
First, the AIPW estimator has the property of semiparametric efficiency, which indicates that it has the smallest asymptotic variance among a certain class \citep{hahn1998role}.
The property is necessary to prove that the strategy using the AIPW estimator is optimal, which means the misidentification probability is small enough to achieve its lower bound. 
The second reason is more technical; the AIPW estimator simplifies the theoretical analysis (see Section~\ref{sec:aipw_est}).
Specifically, we can decompose an error by the AIPW estimator into a sum of random variables with martingale properties, making it suitable for analysis using the central limit theorem. 
This property is unique to the AIPW estimator but not to naive estimators such as an empirical average.
Details will be given in Section \ref{sec:proof}.

We provide the pseudo-code for our proposed strategy in Algorithm~\ref{alg}. Note that we introduce $C_{\mu}$ and $C_{\sigma^2}$ for technical purposes to bound the estimators and any large positive value can be used.

\begin{algorithm}[tb]
   \caption{NA-AIPW Strategy}
   \label{alg}
\begin{algorithmic}
   \STATE {\bfseries Parameter:} Positive constants $C_{\mu}$ and $C_{\sigma^2}$.
   \STATE {\bfseries Initialization:} 
   \STATE At $t=1$, sample $A_t=1$; at $t=2$, sample $A_t=2$. For $a\in\{1,2\}$, set $\widehat{w}_{a, 1} = \widehat{w}_{a, 2} = 0.5$.
   \FOR{$t=3$ to $T$}
   \STATE Construct $\widehat{w}_{a, t}$ following \eqref{eq:sample_ave_weight}.
   \STATE Draw $A_t = 1$ with probability $\widehat{w}_{1, t}$ and $A_t = 2$ with probability $\widehat{w}_{2, t} = 1 - \widehat{w}_{1, t}$.
   \STATE Observe $Y_t$. 
   \ENDFOR
   \STATE Construct $\widehat{\mu}^{\mathrm{AIPW}}_{a, T}$ for $a\in\{1,2\}$. following \eqref{eq:aipw}.
   \STATE Recommend $\widehat{a}_T$ following \eqref{eq:recommend}.
\end{algorithmic}
\end{algorithm} 

\section{Misidentification Probability and Asymptotic Optimality}
\label{sec:asymp_opt}
In this section, we show the following upper bound of the misspecification probability of the NA-AIPW strategy, which also implies that the strategy is asymptotically optimal. 
\begin{theorem}[Upper bound of the NA-AIPW strategy]\label{thm:optimal}
For any $P^*\in\mathcal{P}^{\mathrm{G}}$ \mkato{with known constants $C_{\mu}, C_{\sigma^2} > 0$ independent of $T$, if $\{(Y_{1,t}, Y_{2,t})\}_{t\in[T]}$ is generated from $P^*$,} the following holds as $\Delta \to 0$:
\begin{align*}
    &\liminf_{T \to \infty} - \frac{1}{T}\log \mathbb{P}_{P^*}\left(\widehat{a}^{\mathrm{AIPW}}_T \neq a^\star(P^*)\right) \geq \frac{\Delta^2}{2(\sigma_1 + \sigma_2)^2} \mkato{- o\left(\Delta^2\right)}.
\end{align*}
\end{theorem}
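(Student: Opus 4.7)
The plan is to express misidentification as a tail event for a martingale $S_T$, apply Chernoff's inequality, and evaluate the resulting conditional moment generating function (MGF) exactly using the Gaussian structure. Without loss of generality assume arm $1$ is optimal, so $\Delta > 0$ and the error event is $\{\widehat{\mu}^{\mathrm{AIPW}}_{1,T} < \widehat{\mu}^{\mathrm{AIPW}}_{2,T}\}$. Writing $X_t \coloneqq \psi_{1,t} - \psi_{2,t}$ and $S_T \coloneqq \sum_{t=1}^T (X_t - \Delta)$, the AIPW identity $\mathbb{E}[\psi_{a,t}\mid\mathcal{F}_{t-1}] = \mu_a$---which follows because $\widehat{w}_{a,t}$ equals the $\mathcal{F}_{t-1}$-conditional probability that $A_t=a$---yields $\mathbb{E}[X_t-\Delta\mid\mathcal{F}_{t-1}] = 0$, so the error event equals $\{S_T < -T\Delta\}$ and Chernoff together with the tower property give, for all $\lambda > 0$,
\[
\mathbb{P}_{P^*}\!\bigl(\widehat{a}^{\mathrm{AIPW}}_T \neq a^\star(P^*)\bigr) \;\leq\; e^{-\lambda T\Delta}\, \mathbb{E}\!\left[\,\prod_{t=1}^T M_t(\lambda)\right], \quad M_t(\lambda) \coloneqq \mathbb{E}\!\left[e^{-\lambda(X_t - \Delta)} \,\middle|\, \mathcal{F}_{t-1}\right].
\]

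I would next compute $M_t(\lambda)$ exactly. Conditional on $\mathcal{F}_{t-1}$ the variable $X_t$ is a two-component Gaussian mixture: on $\{A_t=1\}$ (probability $\widehat{w}_{1,t}$) it is $\mathcal{N}(m_{1,t},\,\sigma_1^2/\widehat{w}_{1,t}^2)$ with $m_{1,t} = \widehat{\mu}_{1,t} - \widehat{\mu}_{2,t} + (\mu_1-\widehat{\mu}_{1,t})/\widehat{w}_{1,t}$, and symmetrically on $\{A_t=2\}$. Setting $\delta_{a,t}\coloneqq m_{a,t}-\Delta$, the martingale identity forces $\widehat{w}_{1,t}\delta_{1,t} + \widehat{w}_{2,t}\delta_{2,t} = 0$, so first-order shifts cancel in the mixture mean. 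Taylor expanding each Gaussian MGF to second order in $\lambda$ and using $\log(1+u)\leq u$ gives
\[
\log M_t(\lambda) \;\leq\; \frac{\lambda^2}{2} V_t \;+\; \frac{\lambda^2}{2}\bigl(\widehat{w}_{1,t}\delta_{1,t}^2 + \widehat{w}_{2,t}\delta_{2,t}^2\bigr) \;+\; C\lambda^3,
\]
where $V_t \coloneqq \sigma_1^2/\widehat{w}_{1,t} + \sigma_2^2/\widehat{w}_{2,t}$, the constant $C$ depends only on $C_\mu,\,C_{\sigma^2}$ through the truncations, and $V_t\to V\coloneqq(\sigma_1+\sigma_2)^2$ once $\widehat{w}_{a,t}\to w^*_a$.

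The main obstacle is translating this pointwise bound into a bound on $\mathbb{E}[\prod_t M_t(\lambda)]$: both $V_t$ and $\delta_{a,t}$ are $\mathcal{F}_{t-1}$-measurable but random, and a uniform worst-case replacement of $V_t$ by its truncation ceiling would destroy the rate. My plan is first to establish the almost-sure limits $\widehat{\mu}_{a,t}\to\mu_a$ and $\widehat{w}_{a,t}\to w^*_a$---which hold because the truncated $\widehat{w}_{a,t}$ is bounded away from $0$ and $1$, so each arm is sampled $\Omega(t)$ times and the strong law of large numbers applies to $\widetilde{\mu}_{a,t}$ and $\widetilde{\sigma}_{a,t}$---and then to decompose $\mathbb{E}[\prod_t M_t(\lambda)]$ over a good event $E_T(\epsilon) = \{\,|V_t - V|\leq\epsilon \text{ and } \delta_{a,t}^2\leq\epsilon \text{ for all } t \geq T_0(\epsilon)\,\}$ and its complement. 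On $E_T(\epsilon)$ the tight bound above applies directly; on $E_T(\epsilon)^c$ one uses a crude deterministic MGF bound while showing $\mathbb{P}(E_T(\epsilon)^c)$ decays exponentially in $T$ via a Chernoff-type concentration inequality for the truncated variance estimators---which is the role of the novel concentration bound promised in Section~\ref{sec:proof}.

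Combining the two regimes and choosing $\lambda = \Delta/V$ yields
\[
-\frac{1}{T}\log\mathbb{P}_{P^*}\!\bigl(\widehat{a}^{\mathrm{AIPW}}_T\neq a^\star(P^*)\bigr) \;\geq\; \frac{\Delta^2}{2V} \;-\; \frac{C'\epsilon\,\Delta^2}{V^2} \;-\; \frac{C\,\Delta^3}{V^3} \;-\; o(1),
\]
with $o(1)\to 0$ as $T\to\infty$. Passing to $\liminf_{T\to\infty}$ and then $\epsilon\downarrow 0$ leaves only the cubic term $C\Delta^3/V^3$, and since $V$ is uniformly bounded below by $4C_{\sigma^2}$ this term is $o(\Delta^2)$ as $\Delta\to 0$. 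This is where the small-gap regime is essential: it is precisely what makes the third-order Taylor remainder in the mixture MGF, and the residual variance-estimation slack, negligible compared to the main term $\Delta^2/(2V)$.
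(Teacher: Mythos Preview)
Your overall architecture---Chernoff on the martingale $S_T$, tower property to factor into conditional MGFs, Taylor expansion of $\log M_t(\lambda)$ to second order, then optimizing $\lambda\sim\Delta/V$---is exactly the paper's. Your explicit Gaussian-mixture computation of $M_t(\lambda)$ is a more concrete version of the paper's generic cumulant expansion, and your second-order term $\tfrac{\lambda^2}{2}V_t + \tfrac{\lambda^2}{2}(\widehat w_{1,t}\delta_{1,t}^2 + \widehat w_{2,t}\delta_{2,t}^2)$ is precisely the paper's $\tfrac{\lambda^2}{2}\mathbb{E}[\Psi_t^2\mid\mathcal F_{t-1}]\cdot V$.

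The genuine divergence---and the gap---is in how you pass from the random per-round bound to a bound on $\mathbb E[\prod_t M_t(\lambda)]$. You propose a good-event/bad-event split and assert that $\mathbb P(E_T(\epsilon)^c)$ decays exponentially in $T$, attributing this to the ``novel concentration bound promised in Section~\ref{sec:proof}.'' That is a misreading: the paper proves no such tail bound on the variance estimators, and indeed for a fixed threshold $T_0(\epsilon)$ the probability $\mathbb P\bigl(\exists\,t\ge T_0:\ |V_t-V|>\epsilon\bigr)$ does \emph{not} decay in $T$ at all---it is a fixed positive number. On $E_T^c$ your crude bound $\prod_t M_t(\lambda)\le e^{CT\lambda^2}$ then dominates and destroys the rate. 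The paper sidesteps this entirely with a much simpler device: since $u_t\coloneqq\mathbb E[\Psi_t^2\mid\mathcal F_{t-1}]-1$ is \emph{bounded} and converges to $0$ almost surely, the Ces\`aro average $\tfrac{1}{T}\sum_t u_t$ is almost surely $<\epsilon$ for all large $T$ (Lemma~\ref{lem:conv_prob_second}), and since $\sum_t u_t$ sits \emph{linearly inside the exponent}, the almost-sure bound transfers directly to the expectation without any splitting. This Ces\`aro step is the key idea you are missing; once you have it, no concentration inequality for $\widehat w_{a,t}$ or $\widehat\mu_{a,t}$ is needed beyond almost-sure consistency.
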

\mkato{We note again that $C_{\mu}$ and $C_{\sigma^2}$ are introduced for technical purpose. The constant $C_{\mu}$ is introduced to guarantee the boundedness of the estimators, and it is sufficient to use a sufficiently large value for it. The constant $C_{\sigma^2}$ is used to draw each arm infinitely many times as $T\to \infty$ and avoid the estimators of the means $\mu^*1$ and $\mu^*_2$ diverging to infinity, and it is sufficient to use a sufficiently small value for $C_{\sigma^2}> 0$.}

Note that the lower bound of $-\frac{1}{T}\log\mathbb{P}_{P^*}\left(\widehat{a}^{\mathrm{AIPW}}_T \neq a^\star(P^*)\right)$ implies the upper bound of $\mathbb{P}_{P^*}\left(\widehat{a}^{\mathrm{AIPW}}_T \neq a^\star(P^*)\right)$. This theorem implies us to evaluate the probability of misidentification up to the constant term, even when it is exponentially small, as $\Delta\to 0$. 

This result directly implies the asymptotic optimality of the NA-AIPW strategy. 
As $\Delta\to 0$, the upper bound matches the lower bound in Proposition~\ref{prp:lowerbound_2arms}. This asymptotic optimality result suggests that the estimation error of the target allocation ratio (variances) $w^{*}$ is negligible when $\Delta\to 0$. This is because the estimation error is insignificant compared to the challenges of identifying the best arm due to the small gap.

Although studies, such as \citet{Ariu2021}, \citet{Qin2022open}, and \citet{degenne2023existence}, point out the non-existence of the optimal strategies in fixed-budget BAI against the lower bound shown by \citet{Kaufman2016complexity}, our result does not yield a contradiction. Existing impossibility results discuss the existence of a strategy that violates the lower bound. Note that the lower bounds in \citet{Kaufman2016complexity} are applicable to any instances in the bandit models (with some regularity conditions). In other words, if we consider the lower bound in \citet{Kaufman2016complexity} for all instances, there exists an instance under which there exists a strategy whose lower bound is larger than the lower bound derived by \citet{Kaufman2016complexity}. In contrast, we only consider bandit models where $\Delta \to 0$. Our result implies that if we restrict bandit models, the upper bounds of our strategy within the restricted bandit models match the lower bound. Because our optimality is limited to a case where $\Delta \to 0$, we refer to our optimality as asymptotic optimality under the small-gap regime or local asymptotic optimality.

We conjecture that even if we replace the AIPW estimator with the sample average estimator, defined as $\widetilde{\mu}_{a, t} =  ({\sum^{t-1}_{s=1}\mathbbm{1}[A_s = a]})^{-1}\sum^{t-1}_{s=1}\mathbbm{1}[A_s = a] Y_{a, s}$ in Section~\ref{sec:sample}, the upper bound of the strategy still matches the lower bound. However, the proof is an open issue. \citet{hirano2003} and \citet{Hahn2011} show that the sample average estimator $\widetilde{\mu}_{a, t}$ and the AIPW estimator have the same asymptotic variance (or asymptotic distribution). To show the result, we need to employ empirical process arguments. One of the problems in extending the result to analysis for BAI is that their result focuses on the asymptotic distribution, not the tail probability. Therefore, to show the asymptotic optimality of the strategy with the sample average in the sense of the probability of misidentification, we need to modify the result in 
\citet{hirano2003} and \citet{Hahn2011} to analyze the tail probability.

\section{Proof of Theorem~\ref{thm:optimal}}
\label{sec:proof}
To show Theorem~\ref{thm:optimal}, we derive the upper bound of $\mathbb{P}_{P^*}\left(\widehat{\mu}^{\mathrm{AIPW}}_{a^\star(P^*),T} \leq \widehat{\mu}^{\mathrm{AIPW}}_{b, T}\right)$ for $b\in\{1,2\}\backslash \{a^\star(P^*)\}$, which is equivalent to $\mathbb{P}_{P^*}\left(\widehat{a}^{\mathrm{AIPW}}_T \neq a^\star(P^*)\right)$. Without loss of generality, we assume that $a^\star(P^*) = 1$ and $b=2$. 
Let us define $V \coloneqq \frac{\sigma^2_1}{w^*_1} + \frac{\sigma^2_2}{w^*_2} = \left(\sigma_1 + \sigma_2\right)^2$ and 
\begin{align*}
\Psi_t \coloneqq \frac{\psi_{1, t} - \psi_{2, t} -  \Delta}{\sqrt{V}}.
\end{align*}
Therefore, in the following parts, we aim to derive the upper bound of $\mathbb{P}_{P^*}\left(\widehat{\mu}^{\mathrm{AIPW}}_{a^\star(P^*), T} \leq \widehat{\mu}^{\mathrm{AIPW}}_{b, T}\right) = \mathbb{P}_{P^*}\left(\widehat{\mu}^{\mathrm{AIPW}}_{1, T} \leq \widehat{\mu}^{\mathrm{AIPW}}_{2, T}\right) = \mathbb{P}_{P^*}\left(\sum^T_{t=1}\Psi_t \leq - \frac{T\Delta}{\sqrt{V}}\right)$. 
Let $\mathbb{E}_{P}$ be the expectation under $P\in\mathcal{P}^{\mathrm{G}}$. We derive the upper bound using the Chernoff bound. This proof is partially inspired by techniques in \citet{hadad2019}, and \citet{Kato2020adaptive}.

First, because there exists a constant $C > 0$ independent of $T$ such that $\widehat{w}_{a,t} > C$ by construction, the following lemma holds.
\begin{lemma}
\label{lem:almost_sure_nuisance}
    For any $P^*\in \mathcal{P}^{\mathrm{G}}$ and all $a\in\{1,2\}$, $\widehat{\mu}_{a,t}\xrightarrow{\mathrm{a.s}} \mu^*_{a}$ and $\widehat{\sigma}^2_a \xrightarrow{\mathrm{a.s}} \sigma^2_a$.
\end{lemma}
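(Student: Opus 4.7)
The plan is to establish a.s.\ convergence of the underlying untruncated estimators $\widetilde{\mu}_{a,t}$ and $\widetilde{\sigma}^2_{a,t}$ to $\mu_a^*$ and $\sigma_a^2$, respectively, and then deduce the claim for the truncated versions $\widehat{\mu}_{a,t}$ and $\widehat{\sigma}^2_{a,t}$. The second step is essentially automatic: under $P^*\in\mathcal{P}^{\mathrm{G}}$ the limits $\mu_a^*$ and $\sigma_a$ lie in the truncation intervals $[-C_\mu,C_\mu]$ and $[C_{\sigma^2}^{1/2},1/C_{\sigma^2}^{1/2}]$ by the definition of $\mathcal{P}^{\mathrm{G}}$, so continuity of $\mathrm{thre}(\cdot;\cdot,\cdot)$ at an interior point of its clipping interval promotes convergence of $\widetilde{\mu}_{a,t}$ (resp.\ $\widetilde{\sigma}_{a,t}$) to convergence of $\widehat{\mu}_{a,t}$ (resp.\ $\widehat{\sigma}^2_{a,t}$).

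The first substantive step is to show that $N_{a,t}\coloneqq\sum_{s=1}^{t}\mathbbm{1}[A_s=a]$ grows linearly in $t$. Because $\widehat{\sigma}_{a,t}$ is clamped to a compact subinterval of $(0,\infty)$, both $\widehat{w}_{1,t}$ and $\widehat{w}_{2,t}$ are bounded below by a deterministic constant $c>0$. The process $\{N_{a,t}-\sum_{s=1}^{t}\widehat{w}_{a,s}\}_{t\geq 1}$ is a martingale with increments in $[-1,1]$, so Azuma--Hoeffding together with Borel--Cantelli yield $N_{a,t}/t\geq c/2$ for all sufficiently large $t$ almost surely.

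The second step applies a martingale strong law to the noise. For fixed $a$, set
\begin{align*}
M_t^{(a)}\coloneqq\sum_{s=1}^{t}\mathbbm{1}[A_s=a]\bigl(Y_{a,s}-\mu_a^*\bigr).
\end{align*}
Since $Y_{a,s}$ is independent of $\mathcal{F}_{s-1}$ and of the external randomness used to draw $A_s$, the summand is an $\mathcal{F}_s$-martingale difference with conditional variance at most $\sigma_a^2$. Chow's $L^2$ martingale SLLN (its hypothesis $\sum_{s}s^{-2}\mathbb{E}[(\xi_s^{(a)})^2]<\infty$ is immediate) yields $M_t^{(a)}/t\to 0$ a.s., which combined with Step~1 gives
\begin{align*}
\widetilde{\mu}_{a,t}-\mu_a^*=\frac{t-1}{N_{a,t-1}}\cdot\frac{M_{t-1}^{(a)}}{t-1}\xrightarrow{\mathrm{a.s}}0.
\end{align*}
Rerunning the same argument with $Y_{a,s}^2$ in place of $Y_{a,s}$ (its second moment is finite since $Y_{a,s}$ is Gaussian) delivers $\widetilde{\zeta}_{a,t}\to\mu_a^{*2}+\sigma_a^2$ a.s., and continuous mapping then gives $\widetilde{\sigma}_{a,t}^2=\widetilde{\zeta}_{a,t}-\widetilde{\mu}_{a,t}^2\to\sigma_a^2$ a.s.

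The one point requiring genuine care is the martingale SLLN for the unbounded Gaussian increments. A Kolmogorov-type argument via Chow's theorem handles it cleanly because the conditional second moments are uniformly bounded; alternatively, one can exploit sub-Gaussianity of the martingale differences to get concentration on a geometric subsequence $t=2^k$ and then pass to all $t$ by a Doob maximal inequality. Everything else is routine bookkeeping.
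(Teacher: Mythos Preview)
Your proof is correct and follows precisely the line the paper intends: the paper merely asserts the lemma with the one-line justification ``because there exists a constant $C>0$ independent of $T$ such that $\widehat{w}_{a,t}>C$ by construction,'' and you have supplied the details that this remark suppresses (linear growth of $N_{a,t}$ via the martingale $N_{a,t}-\sum_s\widehat{w}_{a,s}$, then a martingale SLLN on the centered rewards and squared rewards, then continuous mapping through the truncation). One cosmetic point: you do not need the limit to be an \emph{interior} point of the clipping interval, since $\mathrm{thre}(\cdot;c_1,c_2)$ is continuous on all of $\mathbb{R}$ and maps any point in $[c_1,c_2]$ to itself; this covers the boundary case $\mu_a^*=\pm C_\mu$ or $\sigma_a^2\in\{C_{\sigma^2},1/C_{\sigma^2}\}$ without further comment.
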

Furthermore, from $\widehat{\sigma}^2_a \xrightarrow{\mathrm{a.s}} \sigma^2_a$ and continuous mapping theorem, for any $P^*\in \mathcal{P}^{\mathrm{G}}$ and all $a\in\{1,2\}$, $\widehat{w}_{a,t}\xrightarrow{\mathrm{a.s}}w^*_{a,t}$. 

\subsection*{Step~1: Sequence $\{\Psi_t\}^T_{t=1}$ is a martingale difference sequence (MDS)}
We prove that $\{\Psi_t\}^T_{t=1}$ is an MDS; that is, $\mathbb{E}_{P^*}\left[\Psi_t|\mathcal{F}_{t-1}\right] = 0$. Although this fact is well-known in the literature of causal inference \citep{Laan2008TheCA,hadad2019,Kato2020adaptive}, we show the proof for the sake of completeness. 
\begin{lemma}
\label{lem:mds}
For any $P^*\in\mathcal{P}^{\mathrm{G}}$, $\{\Psi_t\}^T_{t=1}$ is an MDS.
\end{lemma}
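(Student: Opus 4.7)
The plan is to verify the MDS property by a direct conditional expectation computation, exploiting the fact that all nuisance estimators appearing in $\psi_{a,t}$ are $\mathcal{F}_{t-1}$-measurable while $A_t$ and $Y_{a,t}$ are conditionally independent given $\mathcal{F}_{t-1}$. Since $\sqrt{V}$ and $\Delta$ are deterministic constants, it suffices to establish $\mathbb{E}_{P^*}[\psi_{a,t}\mid \mathcal{F}_{t-1}] = \mu_a^*$ for each $a \in \{1,2\}$, because then
\begin{align*}
\mathbb{E}_{P^*}[\Psi_t\mid \mathcal{F}_{t-1}] = \frac{\mu_1^* - \mu_2^* - \Delta}{\sqrt{V}} = 0
\end{align*}
by the definition of $\Delta$.

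First, I would record the measurability structure: $\widehat{w}_{a,t}$ and $\widehat{\mu}_{a,t}$ are functions of $(A_1,Y_1,\ldots,A_{t-1},Y_{t-1})$ only (by construction in Section~\ref{sec:track_aipw}), hence $\mathcal{F}_{t-1}$-measurable. The truncation by $C_{\sigma^2}$ guarantees $\widehat{w}_{a,t}$ is bounded away from $0$, so division is harmless and integrability holds (the boundedness of $\widehat{\mu}_{a,t}$ via $C_\mu$ plus Gaussian moments of $Y_{a,t}$ handle the rest). The sampling rule gives $\mathbb{E}_{P^*}[\mathbbm{1}[A_t=a]\mid \mathcal{F}_{t-1}] = \widehat{w}_{a,t}$, and the i.i.d. assumption on $(Y_{1,t},Y_{2,t})$ yields $\mathbb{E}_{P^*}[Y_{a,t}\mid \mathcal{F}_{t-1}] = \mu_a^*$ with $Y_{a,t}$ independent of $A_t$ given $\mathcal{F}_{t-1}$.

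Next, I would compute the conditional expectation term by term. Pulling the $\mathcal{F}_{t-1}$-measurable quantities $\widehat{\mu}_{a,t}$ and $1/\widehat{w}_{a,t}$ outside the conditional expectation, and using the conditional independence of $A_t$ and $Y_{a,t}$ given $\mathcal{F}_{t-1}$,
\begin{align*}
\mathbb{E}_{P^*}[\psi_{a,t}\mid \mathcal{F}_{t-1}]
&= \frac{1}{\widehat{w}_{a,t}}\,\mathbb{E}_{P^*}\bigl[\mathbbm{1}[A_t=a]\mid \mathcal{F}_{t-1}\bigr]\,\mathbb{E}_{P^*}\bigl[Y_{a,t}-\widehat{\mu}_{a,t}\mid \mathcal{F}_{t-1}\bigr] + \widehat{\mu}_{a,t}\\
&= \frac{\widehat{w}_{a,t}(\mu_a^* - \widehat{\mu}_{a,t})}{\widehat{w}_{a,t}} + \widehat{\mu}_{a,t} \;=\; \mu_a^*,
\end{align*}
which is precisely the doubly-robust cancellation underlying the AIPW construction.

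There is no real obstacle here; the lemma is essentially a bookkeeping exercise once the measurability and conditional independence are correctly identified. The only point that deserves a line of justification is that the truncations in the definitions of $\widehat{w}_{a,t}$ and $\widehat{\mu}_{a,t}$ guarantee integrability of $\psi_{a,t}$, so that the conditional expectation is well-defined. Combining the displayed identities for $a=1$ and $a=2$ and dividing by $\sqrt{V}$ completes the proof.
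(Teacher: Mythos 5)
Your proposal is correct and follows essentially the same route as the paper: a direct computation of $\mathbb{E}_{P^*}[\Psi_t\mid\mathcal{F}_{t-1}]$ using the $\mathcal{F}_{t-1}$-measurability of $\widehat{w}_{a,t},\widehat{\mu}_{a,t}$, the sampling probability $\mathbb{E}_{P^*}[\mathbbm{1}[A_t=a]\mid\mathcal{F}_{t-1}]=\widehat{w}_{a,t}$, and the conditional independence of $A_t$ and $Y_{a,t}$, yielding the same doubly-robust cancellation $\mathbb{E}_{P^*}[\psi_{a,t}\mid\mathcal{F}_{t-1}]=\mu^*_a$. Your added remarks on integrability via the truncation constants are a harmless (and slightly more careful) elaboration of what the paper leaves implicit.
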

\begin{proof}
For each $t \in [T]$, it holds that
\begin{align*}
    &\sqrt{V}\mathbb{E}_{P^*}\left[\Psi_t|\mathcal{F}_{t-1}\right]\\
    &=\mathbb{E}_{P^*}\left[\frac{\mathbbm{1}[A_t = 1]\big(Y_{1, t}- \widehat{\mu}_{1, t}\big)}{\widehat{w}_{1,t}} + \widehat{\mu}_{1, t}| \mathcal{F}_{t-1}\right]  - \mathbb{E}_{P^*}\left[\frac{\mathbbm{1}[A_t = 2]\big(Y_{2, t}- \widehat{\mu}_{2, t}\big)}{\widehat{w}_{2,t}} + \widehat{\mu}_{2, t}| \mathcal{F}_{t-1}\right] - \Delta\\
    &= \frac{\widehat{w}_{1, t}\big(\mu^*_{1}- \widehat{\mu}_{1, t}\big)}{\widehat{w}_{1,t}} + \widehat{\mu}_{1, t} - \frac{\widehat{w}_{2, t}\big(\mu^*_{2}- \widehat{\mu}_{2, t}\big)}{\widehat{w}_{2,t}} + \widehat{\mu}_{2, t} - \Delta=\left\{\left(\mu^*_1 - \mu^*_2\right) - \left(\mu^*_1 - \mu^*_2\right)\right\} = 0
\end{align*}
\end{proof}

%\paragraph{Step~2: $\{\Psi_t\}^T_{t=1}$ is sub-Gaussian.}

\subsection*{Step~2: Evaluation by using the Chernoff Bound with Martingales}
By applying the Chernoff bound, for any $v < 0$ and any $\lambda < 0$, it holds that
\begin{align*}
    \mathbb{P}_{P^*}\left(\frac{1}{T}\sum^T_{t=1}\Psi_t \leq v\right) \leq \mathbb{E}_{P^*}\left[\exp\left(\lambda \sum^T_{t=1}\Psi_t\right)\right]\exp\left(-T\lambda v\right).
\end{align*}
From the Chernoff bound and a property of an MDS, we have
\begin{align*}
    &\mathbb{E}_{P^*}\left[\exp\left(\lambda \sum^T_{t=1}\Psi_t\right)\right]=\mathbb{E}_{P^*}\left[\prod^T_{t=1}\mathbb{E}_{P^*}\left[\exp\left(\lambda \Psi_t\right)| \mathcal{F}_{t-1}\right]\right]= \mathbb{E}_{P^*}\left[\exp\left(\sum^T_{t=1}\log \mathbb{E}_{P^*}\left[\exp\left(\lambda\Psi_t\right)|\mathcal{F}_{t-1}\right] \right)\right].
\end{align*}
\mkato{
Then, the Taylor expansion around $\lambda = 0$ yields
\begin{align}
\label{eq:taylor}
    &\log \mathbb{E}_{P^*}\left[\exp\left(\lambda\Psi_t\right)|\mathcal{F}_{t-1}\right]= \frac{\lambda^2}{2} \mathbb{E}_{P^*}\left[\Psi^2_t| \mathcal{F}_{t-1}\right] + o\left(\lambda^2\right) 
\end{align}
as $\lambda \to 0$. This is given as follows. Since $\mathbb{E}_{P^*}\left[\Psi^k_t\exp(\lambda \Psi_t)|\mathcal{F}_{t-1}\right]$ are finite everywhere in an open interval $(0, \infty)$ for $k = 1,2,3$, the Taylor expansion yields the following (for the details, see the textbook such as page~75 in \citet{Bulmer1967} and Theorem~5.19 in \citet{Apostol1974}): 
\[\mathbb{E}_{P^*}\left[\exp\left(\lambda\Psi_t\right)|\mathcal{F}_{t-1}\right] = 1 + \sum^2_{k=1}\mathbb{E}_{P^*}\left[\Psi^k_t / k!|\mathcal{F}_{t-1}\right] + o\left(\lambda^2\right)\]
as $\lambda \to 0$. Note that the finiteness of $\mathbb{E}_{P^*}\left[\Psi^k_t\exp(\lambda \Psi_t)|\mathcal{F}_{t-1}\right]$ comes from the following in $\Psi_t$: (i) $Y_{a, t}$ is a Gaussian random variable, (ii) $\widehat{\mu}_{a, t}$ and $\widehat{w}_{a, t}$ are bounded random variables by our truncation, and (iii) the lower bound of $\widehat{w}$ is given by $C_{\sigma^2}$. 
}
\mkato{
By using the Taylor expansion again, we approximate $\log(1 + z)$ around $z = 0$ as $\log(1 + z) = z - z^2/2 + z^3/3 - \cdots$. Therefore, we have
\begin{align*}
    &\log \mathbb{E}_{P^*}\left[\exp\left(\lambda\Psi_t\right)|\mathcal{F}_{t-1}\right]\\
    &= \Big\{\lambda\mathbb{E}_{P^*}\left[\Psi_t| \mathcal{F}_{t-1}\right] + \frac{\lambda^2}{T}\mathbb{E}_{P^*}\left[\Psi^2_t / 2!| \mathcal{F}_{t-1}\right] + o\left(\lambda^2\right) \Big\} - \frac{1}{2}\left\{\lambda\mathbb{E}_{P^*}\left[\Psi_t| \mathcal{F}_{t-1}\right] + o\left(\lambda\right) \right\}^2\\
    &= \frac{\lambda^2}{T}\mathbb{E}_{P^*}\left[\Psi^2_t / 2!| \mathcal{F}_{t-1}\right] + o\left(\lambda^2\right)
\end{align*}
as $\lambda \to 0$. 
Here, we used $\mathbb{E}_{P^*}\left[\Psi_t| \mathcal{F}_{t-1}\right] = 0$. 
Thus, the \eqref{eq:taylor} holds. 
}

\subsection*{Step~3: Convergence of the Second Moment}
We next show $\mathbb{E}_{P^*}\left[\Psi^2_t|\mathcal{F}_{t-1}\right] - 1\xrightarrow{\mathrm{a.s}} 0$. This result is a direct consequence of Lemma~\ref{lem:almost_sure_nuisance}. 
\begin{lemma}
\label{lem:almost_conv}
For any $P^*\in\mathcal{P}^{\mathrm{G}}$, we have
    \begin{align*}
    \mathbb{E}_{P^*}\left[\Psi^2_t|\mathcal{F}_{t-1}\right] - 1\xrightarrow{\mathrm{a.s}} 0\qquad \mathrm{as}\ \ t\to \infty.
    \end{align*}
\end{lemma}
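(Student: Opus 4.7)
The plan is to write out $\mathbb{E}_{P^*}[\Psi_t^2 \mid \mathcal{F}_{t-1}]$ as an explicit function of the nuisance estimators $(\widehat{\mu}_{a,t}, \widehat{w}_{a,t})$ and the true parameters $(\mu_a^*, \sigma_a^2)$, and then invoke Lemma~\ref{lem:almost_sure_nuisance} (together with the almost sure convergence $\widehat{w}_{a,t} \xrightarrow{\mathrm{a.s.}} w_a^*$ already noted after that lemma) and the continuous mapping theorem to pass to the limit. Because $\widehat{\mu}_{a,t}$ and $\widehat{w}_{a,t}$ are $\mathcal{F}_{t-1}$-measurable, the conditional expectation only integrates out $(A_t, Y_{1,t}, Y_{2,t})$, and the truncation in the definitions of $\widehat{\mu}_{a,t}$ and $\widehat{\sigma}_{a,t}$ keeps all relevant quantities bounded, so limits can be taken termwise.

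Concretely, I would decompose the numerator of $\Psi_t$ as $A_t^\psi - B_t^\psi + C_t^\psi$, where
\begin{equation*}
A_t^\psi = \frac{\mathbbm{1}[A_t=1](Y_{1,t}-\widehat{\mu}_{1,t})}{\widehat{w}_{1,t}}, \quad B_t^\psi = \frac{\mathbbm{1}[A_t=2](Y_{2,t}-\widehat{\mu}_{2,t})}{\widehat{w}_{2,t}}, \quad C_t^\psi = \widehat{\mu}_{1,t}-\widehat{\mu}_{2,t}-\Delta.
\end{equation*}
The key simplification is that $\mathbbm{1}[A_t=1]\mathbbm{1}[A_t=2]=0$, so the cross term $A_t^\psi B_t^\psi$ vanishes. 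Using $\mathbb{E}_{P^*}[(Y_{a,t}-\widehat{\mu}_{a,t})^2\mid\mathcal{F}_{t-1}] = \sigma_a^2 + (\mu_a^*-\widehat{\mu}_{a,t})^2$ and $\mathbb{P}(A_t=a\mid\mathcal{F}_{t-1})=\widehat{w}_{a,t}$, the squared terms give
\begin{equation*}
\mathbb{E}_{P^*}[(A_t^\psi)^2\mid\mathcal{F}_{t-1}] = \frac{\sigma_1^2+(\mu_1^*-\widehat{\mu}_{1,t})^2}{\widehat{w}_{1,t}}, \qquad \mathbb{E}_{P^*}[(B_t^\psi)^2\mid\mathcal{F}_{t-1}] = \frac{\sigma_2^2+(\mu_2^*-\widehat{\mu}_{2,t})^2}{\widehat{w}_{2,t}}.
\end{equation*}
The cross terms involving $C_t^\psi$ collapse via $\mathbb{E}_{P^*}[A_t^\psi\mid\mathcal{F}_{t-1}] = \mu_1^*-\widehat{\mu}_{1,t}$ and $\mathbb{E}_{P^*}[B_t^\psi\mid\mathcal{F}_{t-1}] = \mu_2^*-\widehat{\mu}_{2,t}$, which yields $2\mathbb{E}_{P^*}[(A_t^\psi-B_t^\psi)C_t^\psi\mid\mathcal{F}_{t-1}] = -2(C_t^\psi)^2$, so the $C_t^\psi$ contribution reduces to $-(C_t^\psi)^2$.

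Combining and dividing by $V$ gives the explicit identity
\begin{equation*}
\mathbb{E}_{P^*}[\Psi_t^2\mid\mathcal{F}_{t-1}] = \frac{1}{V}\left\{\frac{\sigma_1^2+(\mu_1^*-\widehat{\mu}_{1,t})^2}{\widehat{w}_{1,t}} + \frac{\sigma_2^2+(\mu_2^*-\widehat{\mu}_{2,t})^2}{\widehat{w}_{2,t}} - (\widehat{\mu}_{1,t}-\widehat{\mu}_{2,t}-\Delta)^2\right\}.
\end{equation*}
By Lemma~\ref{lem:almost_sure_nuisance}, $\widehat{\mu}_{a,t} \to \mu_a^*$ and $\widehat{w}_{a,t} \to w_a^*$ almost surely, so the right-hand side converges almost surely to $V^{-1}(\sigma_1^2/w_1^* + \sigma_2^2/w_2^*) = V^{-1}\cdot V = 1$. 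Continuity of the map holds on the set where $\widehat{w}_{a,t}$ is bounded below, which is guaranteed by the truncation at $C_{\sigma^2}$, so passing to the limit is legitimate.

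There is no serious obstacle here; the only mildly subtle point is verifying that the cross terms collapse to $-(C_t^\psi)^2$ rather than $+(C_t^\psi)^2$, and making sure the truncation bounds are used to rule out any blow-up before taking the limit. Once the closed form above is written down, the conclusion is immediate from the almost sure convergence of the nuisance estimators and the continuous mapping theorem.
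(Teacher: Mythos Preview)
Your proposal is correct and follows essentially the same route as the paper: both expand $(\psi_{1,t}-\psi_{2,t}-\Delta)^2$, exploit $\mathbbm{1}[A_t=1]\mathbbm{1}[A_t=2]=0$ and the $\mathcal{F}_{t-1}$-measurability of $(\widehat{\mu}_{a,t},\widehat{w}_{a,t})$ to reach the same closed form, and then conclude via the almost sure convergence of the nuisance estimators. Your invocation of the continuous mapping theorem is in fact cleaner than the paper's appeal to dominated convergence, since after conditioning the expression is already a deterministic continuous function of $(\widehat{\mu}_{a,t},\widehat{w}_{a,t})$.
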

\begin{proof}
We have
\begin{align*}
    &V\mathbb{E}_{P^*}\left[\Psi^2_t|\mathcal{F}_{t-1}\right]= \mathbb{E}_{P^*}\left[\left(\psi_{1, t} - \psi_{2, t}- \Delta\right)^2\Big| \mathcal{F}_{t-1}\right]
    \\
    &= \mathbb{E}_{P^*}\Bigg[\Bigg(\frac{\mathbbm{1}[A_{t} = 1]\big(Y_{1,t}- \widehat{\mu}_{1,t}\big)}{\widehat{w}_{1,t}} - \frac{\mathbbm{1}[A_t = 2]\big(Y_{2,t}- \widehat{\mu}_{2,t}\big)}{\widehat{w}_{2,t}}\Bigg)^2
    \\
    &\ \ \ \ \ \ \ \ \ \ \ \ \ \ \ \ \ \ + 2\Bigg(\frac{\mathbbm{1}[A_{t} = a^\star(P^*)]\big(Y_{1,t}- \widehat{\mu}_{1,t}\big)}{\widehat{w}_{1,t}} - \frac{\mathbbm{1}[A_{t} = a]\big(Y_{2,t}- \widehat{\mu}_{2,t}\big)}{\widehat{w}_{2,t}}\Bigg)\times \left( \widehat{\mu}_{1,t} - \widehat{\mu}_{2,t} - (\mu^*_1 - \mu^*_2)\right)
    \\
    &\ \ \ \ \ \ \ \ \ \ \ \ \ \ \ \ \ \ + \left( \widehat{\mu}_{1,t} - \widehat{\mu}_{2,t} - (\mu^*_1 - \mu^*_2)\right)^2 |  \mathcal{F}_{t-1}\Bigg]
    \\
    &= \mathbb{E}_{P^*}\Bigg[\frac{\mathbbm{1}[A_{t} = 1]\big(Y_{1, t}- \widehat{\mu}_{1,t}\big)^2}{\big(\widehat{w}_{1,t}\big)^2} + \frac{\mathbbm{1}[A_{t} = 2]\big(Y_{2,t}- \widehat{\mu}_{2,t}\big)^2}{\big(\widehat{w}_{2,t}\big)^2}
    \\
    &\ \ \ \ \ \ \ \ \ \ \ \ \ \ \ \ \ \ + 2\left(\frac{\mathbbm{1}[A_{t}  = 1]\big(Y_{1,t}- \widehat{\mu}_{1,t}\big)}{\widehat{w}_{1,t}} - \frac{\mathbbm{1}[A_t = a]\big(Y_{2,t}- \widehat{\mu}_{2,t}\big)}{\widehat{w}_{2,t}}\right) \left( \widehat{\mu}_{1,t} - \widehat{\mu}_{2,t} - (\mu^*_1 - \mu^*_2)\right)
    \\
    &\ \ \ \ \ \ \ \ \ \ \ \ \ \ \ \ \ \ + \left( \big(\widehat{\mu}_{1,t} - \widehat{\mu}_{2,t}\big) - (\mu^*_1 - \mu^*_2)\right)^2 |  \mathcal{F}_{t-1}\Bigg]
    \\
    &= \sum_{a\in\{1,2\}}\mathbb{E}_{P^*}\left[\frac{\big(Y_{a,t}- \widehat{\mu}_{a,t}\big)^2}{\big(\widehat{w}_{a,t}\big)^2}|  \mathcal{F}_{t-1}\right]-  \mathbb{E}_{P^*}\left[\left(\big(\widehat{\mu}_{1,t} - \widehat{\mu}_{2,t}\big) - \big(\mu^*_1 - \mu^*_2\big)\right)^2|  \mathcal{F}_{t-1}\right]. \label{eq:check4}
\end{align*}
Here, for $a\in\{1,2\}$, the followings hold:
\begin{align*}
&\mathbb{E}_{P^*}\Bigg[\frac{\mathbbm{1}[A_{t} = a]\big(Y_{a,t}- \widehat{\mu}_{a,t}\big)^2}{\big(\widehat{w}_{a,t}\big)^2}|  \mathcal{F}_{t-1}\Bigg]=\mathbb{E}_{P^*}\Bigg[\frac{\big(Y_{a,t}- \widehat{\mu}_{a,t}\big)^2}{\widehat{w}_{a,t}}|  \mathcal{F}_{t-1}\Bigg]=\frac{\mathbb{E}_{P^*}[(Y_{a,t})^2] - 2\mu^*_a\widehat{\mu}_{a,t}+ (\widehat{\mu}_{a, t})^2}{\widehat{w}_{a,t}}\\
&=\frac{\mathbb{E}_{P^*}[(Y_{a,t})^2] - (\mu^*_a)^2 + (\mu^*_a - \widehat{\mu}_{a,t})^2}{\widehat{w}_{a,t}}=\frac{\sigma^2_a + (\mu^*_a - \widehat{\mu}_{a,t})^2}{\widehat{w}_{a,t}},
\end{align*}
and 
\[\mathbb{E}_{P^*}\Bigg[\frac{\mathbbm{1}[A_{t} = a]\big(Y_{a,t}- \widehat{\mu}_{2,t}\big)^2}{\big(\widehat{w}_{1,t}\big)^2}\frac{\mathbbm{1}[A_{t} = a]\big(Y_{a,t}- \widehat{\mu}_{2,t}\big)^2}{\big(\widehat{w}_{2,t}\big)^2}|  \mathcal{F}_{t-1}\Bigg] = 0,\] 
where we used $\mathbb{E}_{P^*}[(Y_{a,t})^2|x] - (\mu^*_a)^2 = \sigma^2_a$.
Therefore, the following holds:
\begin{align*}
& \mathbb{E}_{P^*}\left[\frac{\big(Y_{1,t}- \widehat{\mu}_{1,t}\big)^2}{\widehat{w}_{1,t}}|  \mathcal{F}_{t-1}\right]+ \mathbb{E}_{P^*}\left[\frac{\big(Y_{2,t}- \widehat{\mu}_{2,t}\big)^2}{\widehat{w}_{2,t}}|  \mathcal{F}_{t-1}\right] -  \mathbb{E}_{P^*}\left[\left(\big(\widehat{\mu}_{1,t} - \widehat{\mu}_{2,t}\big) - (\mu^*_1 - \mu^*_2)\right)^2|  \mathcal{F}_{t-1}\right]
\\
&= \mathbb{E}_{P^*}\left[\frac{\sigma^2_1 + (\mu^*_1 - \widehat{\mu}_{1,t})^2}{\widehat{w}_{1,t}}\right] +  \mathbb{E}_{P^*}\left[\frac{\sigma^2_2 + (\mu^*_2 - \widehat{\mu}_{2,t})^2}{\widehat{w}_{2,t}}\right]  -  \mathbb{E}_{P^*}\left[\left(\big(\widehat{\mu}_{1,t} - \widehat{\mu}_{2,t}\big) - (\mu^*_1 - \mu^*_2)\right)^2\right].
\end{align*}
Because $\widehat{\mu}_{a,t}\xrightarrow{\mathrm{a.s.}} \mu^*_a$ and $\widehat{w}_{a,t}\xrightarrow{\mathrm{a.s.}} w^*_a$, we have
\begin{align*}
&\lim_{t\to\infty}\Bigg|\left(\frac{\sigma^2_1 + (\mu^*_1 - \widehat{\mu}_{1,t})^2}{\widehat{w}_{1,t}}\right)+ \left(\frac{\sigma^2_2 + (\mu^*_2 - \widehat{\mu}_{2,t})^2}{\widehat{w}_{2,t}}\right) - \left(\big(\widehat{\mu}_{1,t} - \widehat{\mu}_{2,t}\big) - \big(\mu^*_1 - \mu^*_2\big)\right)^2\\
&\ \ \ \ \ \ \ \ \ \ \ \ \ \ \ \ \ \ \ \ \ \ \ \ \ \ \ \ \ \ \ \ \ \ \ \ \ \ \ \ \ \ \ \ \ \ \ \ \ \ \ \ \ \ \ \ \ \ \ \ \ \ \ \ \ \ \ \ \ \ \ \ \ \ \ \ \ \ \ \ \ \ \ \ -  \left(\frac{\sigma^2_1}{w^*_1} + \frac{\sigma^2_a}{w^*_2} + \left(\big(\mu^*_1 - \mu^*_2\big) - (\mu^*_1 - \mu^*_2)\right)^2\right)\Bigg|
\\
&\leq \lim_{t\to\infty}\left|\frac{\sigma^2_1}{\widehat{w}_{1,t}} - \frac{\sigma^2_1}{w^*_1}\right| +  \lim_{t\to\infty}\left| \frac{\sigma^2_2}{\widehat{w}_{2,t}} - \frac{\sigma^2_2}{w^*_2}\right| + \lim_{t\to\infty}\frac{(\mu^*_1 - \widehat{\mu}_{1,t})^2}{\widehat{w}_{1,t}} + \lim_{t\to\infty}\frac{ (\mu^*_2 - \widehat{\mu}_{2,t})^2}{\widehat{w}_{2,t}}\\
&\ \ \ \ \ \ \ \ \ \ \ \ \ \ \ \ \ \ \ \ \ \ \ \ \ \ \ \ \ \ \ \ \ \ \ + \lim_{t\to\infty}\big|\left(\big(\widehat{\mu}_{1,t} - \widehat{\mu}_{2,t}\big) - \big(\mu^*_1 - \mu^*_2\big)\right)^2 - \left(\big(\mu^*_1 - \mu^*_2\big) - \big(\mu^*_1 - \mu^*_2\big)\right)^2\big|= 0,
\end{align*}
with probability $1$.
Therefore, from Lebesgue's dominated convergence theorem, we obtain 
\begin{align*}
&V\mathbb{E}_{P^*}\left[\Psi^2_t|\mathcal{F}_{t-1}\right] - V\\
&= \mathbb{E}_{P^*}\left[\frac{\sigma^2_1 + (\mu^*_1 - \widehat{\mu}_{1,t})^2}{\widehat{w}_{1,t}}|\mathcal{F}_{t-1}\right] +  \mathbb{E}_{P^*}\left[\frac{\sigma^2_a + (\mu^*_2 - \widehat{\mu}_{2,t})^2}{\widehat{w}_{2,t}}|\mathcal{F}_{t-1}\right]\\
&\ \ \ \ \ -  \mathbb{E}_{P^*}\left[\left(\big(\widehat{\mu}_{1,t} - \widehat{\mu}_{2,t}\big) - \big(\mu^*_1 - \mu^*_2\big)\right)^2|\mathcal{F}_{t-1}\right] - \mathbb{E}_{P^*}\Bigg[\frac{\sigma^2_1}{w^*_1} + \frac{\sigma^2_2}{w^*_2}+ \left(\big(\mu^*_1 - \mu^*_2\big) - \big(\mu^*_1 - \mu^*_2\big)\right)^2|\mathcal{F}_{t-1}\Bigg]\\
&\xrightarrow{\mathrm{a.s.}} 0.
\end{align*}
\end{proof}

This lemma immediately yields the following lemma. 
\begin{lemma}
\label{lem:conv_prob_second}
For any $P^*\in\mathcal{P}^{\mathrm{G}}$ and any $\epsilon > 0$, there exists $t(\epsilon) > 0$ such that for all $T > t(\epsilon)$, we have
    \begin{align*}
    \frac{1}{T}\sum^T_{t=1}\Big|\mathbb{E}_{P^*}\left[\Psi^2_t|\mathcal{F}_{t-1}\right] - 1\Big| < \epsilon
    \end{align*}
with probability one.
\end{lemma}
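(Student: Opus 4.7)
The plan is to deduce this as the Cesàro-mean version of Lemma~\ref{lem:almost_conv}. Write $a_t \coloneqq \bigl|\mathbb{E}_{P^*}[\Psi_t^2 \mid \mathcal{F}_{t-1}] - 1\bigr|$. By Lemma~\ref{lem:almost_conv}, there exists an event $\Omega_0$ with $\mathbb{P}_{P^*}(\Omega_0) = 1$ on which $a_t(\omega) \to 0$ as $t \to \infty$. The goal reduces to showing that, on $\Omega_0$, $\frac{1}{T}\sum_{t=1}^T a_t(\omega) \to 0$, since then given $\epsilon > 0$ we may choose $t(\epsilon, \omega)$ such that the partial average stays below $\epsilon$ for all $T > t(\epsilon, \omega)$.

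First I would establish a deterministic uniform bound $a_t \leq M$ for some finite constant $M$ independent of $t$ and $\omega$. This follows directly from the construction of the strategy: the truncation in Section~\ref{sec:track_aipw} forces $\widehat{w}_{a,t}$ to lie in a closed subinterval of $(0,1)$ bounded away from zero (with bounds determined by $C_{\sigma^2}$), and $\widehat{\mu}_{a,t} \in [-C_\mu, C_\mu]$. Combined with $\mathbb{E}_{P^*}[Y_{a,t}^2] = \sigma_a^2 + \mu_a^{*2}$, which is uniformly bounded since $\sigma_a^2 \leq 1/C_{\sigma^2}$ and $|\mu_a^*| \leq C_\mu$, the closed-form expression for $\mathbb{E}_{P^*}[\Psi_t^2 \mid \mathcal{F}_{t-1}]$ derived inside the proof of Lemma~\ref{lem:almost_conv} is bounded by a constant $M$ depending only on $C_\mu$ and $C_{\sigma^2}$.

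Second, I would run the standard Cesàro argument pointwise on $\Omega_0$. Fix $\omega \in \Omega_0$ and $\epsilon > 0$. Pick $t_0 = t_0(\omega, \epsilon)$ large enough that $a_t(\omega) < \epsilon/2$ for all $t > t_0$. Splitting the sum yields
\[
\frac{1}{T}\sum_{t=1}^T a_t(\omega) \;\leq\; \frac{1}{T}\sum_{t=1}^{t_0} a_t(\omega) + \frac{T - t_0}{T}\cdot\frac{\epsilon}{2} \;\leq\; \frac{t_0 M}{T} + \frac{\epsilon}{2}.
\]
Choosing $t(\epsilon, \omega) \geq 2 t_0 M / \epsilon$ makes the first term less than $\epsilon/2$ for all $T > t(\epsilon, \omega)$, which proves the claim on $\Omega_0$.

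This is essentially mechanical once Lemma~\ref{lem:almost_conv} is in hand, so there is no substantive obstacle. The only point that requires any care is verifying the uniform bound $M$, which is why the truncation constants $C_\mu$ and $C_{\sigma^2}$ were introduced in the first place; without them, the $(\widehat{w}_{a,t})^{-1}$ and $\widehat{\mu}_{a,t}^2$ terms appearing in $\mathbb{E}_{P^*}[\Psi_t^2 \mid \mathcal{F}_{t-1}]$ could not be controlled uniformly in $t$, and the Cesàro argument would require an additional integrability hypothesis.
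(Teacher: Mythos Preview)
Your proposal is correct and mirrors the paper's own proof: both establish a uniform bound on $\bigl|\mathbb{E}_{P^*}[\Psi_t^2\mid\mathcal{F}_{t-1}]-1\bigr|$ via the truncation constants $C_\mu,C_{\sigma^2}$, and then run the standard Ces\`aro splitting argument on the almost-sure convergence from Lemma~\ref{lem:almost_conv}. Your explicit acknowledgment that the threshold $t(\epsilon,\omega)$ is random is in fact slightly more careful than the paper's phrasing, but the argument is otherwise identical.
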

\mkato{This result is a variant of the Ces\`{a}ro lemma for a case with almost sure convergence. For completeness, we show the proof, which is based on the proof of Lemma~10 in \citet{hadad2019}.}
\begin{proof}
Let $u_t$ be $u_t =  \mathbb{E}_{P^*}\left[\Psi^2_t|\mathcal{F}_{t-1}\right] - 1$. Note that $ \frac{1}{T}\sum^T_{t=1}\mathbb{E}_{P^*}\left[\Psi^2_t|\mathcal{F}_{t-1}\right] - 1 = \frac{1}{T}\sum^T_{t=1} u_t$. 

From the proof of Lemma~\ref{lem:almost_conv}, we can find that $u_t$ is a bounded random variable. Recall that
\begin{align*}
    &V\mathbb{E}_{P^*}\left[\Psi^2_t|\mathcal{F}_{t-1}\right]\\
    &=\mathbb{E}_{P^*}\left[\frac{\sigma^2_1 + (\mu^*_{1} - \widehat{\mu}_{1, t})^2}{\widehat{w}_{1, t}}|\mathcal{F}_{t-1}\right]+  \mathbb{E}_{P^*}\left[\frac{\sigma^2_2 + (\mu^*_2 - \widehat{\mu}_{2, t})^2}{\widehat{w}_{2, t}}|\mathcal{F}_{t-1}\right] -  \mathbb{E}_{P^*}\left[\left(\big(\widehat{\mu}_{1, t} - \widehat{\mu}_{2, t}\big) - \big(\mu^*_1 - \mu^*_2\big)\right)^2|\mathcal{F}_{t-1}\right].
\end{align*}
We assumed that $(\mu^*_1, \mu^*_2, \widehat{\mu}_{1, t}, \widehat{\mu}_{2. t}, \widehat{w}_{1, t}, \widehat{w}_{2, t})$ are all bounded random variables. 
Let $C$ be a constant independent of $T$ such that $|u_t| < C$ for all $t\in\mathbb{N}$. 

Almost-sure convergence of $u_t$ to zero as $t\to\infty$ implies that for any $\epsilon' > 0$, there exists $t(\epsilon)$ such that $|u_t| < \epsilon'$ for all $t\geq t(\epsilon')$ with probability one. Let $\mathcal{E}(\epsilon')$ denote the event in which this happens; that is, $\mathcal{E}(\epsilon') = \{ |u_t| < \epsilon'\quad \forall\ t \geq t(\epsilon')\}$. Under this event, for $T > t(\epsilon')$, the following holds:
\begin{align*}
    \frac{1}{T}\sum^T_{t=1}|u_t| \leq \frac{1}{T}\sum^{t(\epsilon')}_{t=1} C + \frac{1}{T}\sum^{T}_{t=t(\epsilon') + 1} \epsilon = \frac{1}{T}t(\epsilon') C + \epsilon',
\end{align*}
where $\frac{1}{T}t(\epsilon') C \to 0$ as $T\to \infty$.

Therefore, for any $\epsilon > 0$, there exists $t(\epsilon)$ such that for all $T > t(\epsilon)$, $\frac{1}{T}\sum^T_{t=1}|u_t| < \epsilon$ holds with probability one. 
\end{proof}

\subsection*{Step~4: Tail Bound with the Approximated Second Moment}
Let $v = \lambda$. Then, we have
\begin{align*}
    \mathbb{P}_{P^*}\left(\sum^T_{t=1}\Psi_t \leq Tv\right) \leq \mathbb{E}_{P^*}\left[\exp\left( -\frac{T\lambda^2}{2} + \frac{\lambda^2}{2}\left\{\sum^T_{t=1} \mathbb{E}_{P^*}\left[\Psi^2_t| \mathcal{F}_{t-1}\right] - 1\right\} + To\left(\lambda^2\right)\right)\right].
\end{align*}

From Lemma~\ref{lem:conv_prob_second}, for any $\epsilon > 0$, there exists $t(\epsilon) > 0$ such that for all $T > t(\epsilon)$, we have
\begin{align*}
    &\mathbb{E}_{P^*}\left[\exp\left(-\frac{T\lambda^2}{2} + \frac{\lambda^2}{2}\left\{\sum^T_{t=1} \mathbb{E}_{P^*}\left[\Psi^2_t| \mathcal{F}_{t-1}\right] - 1\right\} + To\left(\lambda^2\right)\right)\right]\\
    &=  \exp\left(-\frac{T\lambda^2}{2}  + To\left(\lambda^2\right)\right)\mathbb{E}_{P^*}\left[\exp\left(\frac{\lambda^2}{2}\left\{\sum^T_{t=1} \mathbb{E}_{P^*}\left[\Psi^2_t| \mathcal{F}_{t-1}\right] - 1\right\}\right)\right]\\
    &\leq  \exp\left(-\frac{T\lambda^2}{2}  + To\left(\lambda^2\right)\right)\exp\left(\frac{\lambda^2}{2}T\epsilon\right)= \exp\left(-\frac{T\lambda^2}{2}  + To\left(\lambda^2\right) + \frac{\lambda^2}{2}T\epsilon\right).
\end{align*}

\subsection*{Step~5: Final Step of the Proof of Theorem~\ref{thm:optimal}}
\mkato{For any $\epsilon > 0$, there exists $t(\epsilon) > 0$ such that for all $T > t(\epsilon)$, we obtainz
    \begin{align*}
       &-\frac{1}{T}\log\mathbb{P}_{P^*}\left(\widehat{\mu}^{\mathrm{AIPW}}_{1, T} \leq \widehat{\mu}^{\mathrm{AIPW}}_{2, T}\right) \geq -\left\{-\frac{\lambda^2}{2}  + o\left(\lambda^2\right) + \frac{\lambda^2}{2}\epsilon\right\} = \frac{\lambda^2}{2} - o\left(\lambda^2\right) - \frac{\lambda^2}{2}\epsilon,
    \end{align*}
as $\lambda \to 0$.} 

\mkato{
Let $\lambda = - \frac{\Delta}{\sqrt{V}}$. Then, we have
\begin{align*}
       &-\frac{1}{T}\log\mathbb{P}_{P^*}\left(\widehat{\mu}^{\mathrm{AIPW}, a^\star(P)}_{T} \leq \widehat{\mu}^{\mathrm{AIPW}, a}_{T}\right) \geq \frac{\Delta^2}{2V} - o\left(\frac{\Delta^2}{V}\right)- \frac{\epsilon\Delta^2}{2V},
    \end{align*}
as $\Delta \to 0$.} By letting $\Delta \to 0$ and $T \to \infty$, and then letting $\epsilon \to 0$ independently of $T$ and $\Delta$, we have
\begin{align*}
       -\frac{1}{T}\log\mathbb{P}_{P^*}\left(\widehat{\mu}^{\mathrm{AIPW}}_{1, T} \leq \widehat{\mu}^{\mathrm{AIPW}}_{2, T}\right)&\geq  \frac{\Delta^2}{2V} - o\left(\Delta^2\right).
    \end{align*}
Thus, the proof is complete. 

\color{black}
\section{Discussion} 
\label{sec:discuss}
In this section, we discuss related topics.

\subsection{Neyman Allocation with Unknown Variances}
For two-armed Gaussian bandits with known variances, \citet{chen2000}, \citet{glynn2004large}, and \citet{Kaufman2016complexity} conclude that sampling each arm with a proportion of the standard deviation is optimal, which corresponds to the Neyman allocation \citet{Neyman1934OnTT}. 

The Neyman allocation with unknown variances has been long studied in various fields. \citet{Laan2008TheCA} and \citet{Hahn2011} develop algorithms for estimating the gap parameter $\Delta$ itself in an adaptive experiment with the Neyman allocation. They estimate the variances and show their algorithms' optimalities under the framework of semiparametric efficiency, which closely connects to the Gaussian approximation of estimators using the central limit theorem. Although they show their optimality under the framework, they do not investigate the asymptotic optimality in the large-deviation framework.
\citet{Meehan2018}, \citet{Kato2020adaptive}, and \citet{zhao2023adaptive} also attempt to adrress related problems.

\citet{Jourdan2023} examines BAI with unknown variances in a fixed-confidence setting. Beyond the difference in settings (we focus on fixed-budget BAI), the methods of deriving lower bounds differ between our approach and theirs. They determine the lower bound while incorporating the assumption that the variances are unknown. Moreover, under a large-gap regime ($\Delta$ is fixed), they confirm a discrepancy between the lower bounds when variances are known versus unknown. Specifically, they consider alternative hypotheses related to both variances and means. In contrast, the lower bounds presented by \citet{Kaufman2016complexity} and ourselves are based on alternative hypotheses with fixed variances. While \citet{Jourdan2023} suggests that the upper bounds of strategies with unknown variances cannot align with the lower bound when variances are known, our findings indicate a match under the small-gap regime. 

\subsection{Necessity of the Small-Gap Regime}
First, we discuss the necessity of the small-gap regime.

\paragraph{Estimation error of the variances.} The most critical reason we employ the small-gap regime is that the estimation error of the variances cannot be ignored in evaluating the probability of misidentification. To clarify this point, we review the probability of misidentification when we know the variances. 

\paragraph{Probability of misidentification of the \citet{Kaufman2016complexity}'s strategy with known variances.} \citet{Kaufman2016complexity} proposes drawing arm $a$ in $\frac{\sigma^a}{\sigma^1 + \sigma^2}T = w^*_aT$ rounds (for simplicity, we deal with $w^*_aT$ as an integer). Without loss of generality, we consider draw arm $1$ in the first $w^*_1T$ rounds and draw arm $2$ in the following $T - w^*_1T = w^*_2T$ rounds. Then, they estimate the best arm as $\widehat{a}^{\mathrm{KCG}}_T \coloneqq \argmax_{a\in \{1, 2\}} \widehat{\mu}^{\mathrm{SA}}_{a, T}$, where $\widehat{\mu}^{\mathrm{SA}}_{a, T}$ is the sample average defined as
\begin{align}
    \widehat{\mu}^{\mathrm{SA}}_{a, T} \coloneqq \frac{1}{\sum^T_{t=1}\mathbbm{1}[A^{\mathrm{KCG}}_t = a]}\sum^T_{t=1}\mathbbm{1}[A^{\mathrm{KCG}}_t = a]Y_t,
\end{align}
where $A^{\mathrm{KCG}}_t$ denotes an arm drawn by the \citet{Kaufman2016complexity}'s strategy. 
For the strategy, they show that its probability of misidentification is given as
\begin{align}
    \liminf_{T \to \infty} - \frac{1}{T}\log \mathbb{P}_{P^*}\left(\widehat{a}^{\mathrm{KCG}}_T \neq a^\star(P^*)\right) \geq \frac{\Delta^2}{2(\sigma_1 + \sigma_2)^2}.
\end{align}
Note that this upper bound comes from the upper bound of 
\begin{align}
    \liminf_{T \to \infty} - \frac{1}{T}\log\mathbb{P}_{P^*}\left(\widehat{a}^{\mathrm{KCG}}_T \neq a^\star(P^*)\right) = \liminf_{T \to \infty} - \frac{1}{T}\log\mathbb{P}_{P^*}\left(\widehat{\mu}^{\mathrm{SA}}_{1, T} - \widehat{\mu}^{\mathrm{SA}}_{2, T} \leq 0\right),
\end{align}
in case where arm $1$ is the best arm ($a^\star(P^*) = 1$).
In contrast, in Theorem~\ref{thm:optimal}, we show that our strategy's upper bound is $\liminf_{T \to \infty} - \frac{1}{T}\log \mathbb{P}_{P^*}\left(\widehat{a}^{\mathrm{AIPW}}_T \neq a^\star(P^*)\right) \geq \frac{\Delta^2}{2(\sigma_1 + \sigma_2)^2} - o(\Delta^2)$. 
The difference between our upper bounds and theirs is the existence of $-o(\Delta^2)$ term, which vanishes as $\Delta \to 0$. This difference comes from the estimation error of the variances.

\paragraph{Intuitive explanation about the influence of the influence of the variance estimation.} To understand the variance estimation, we rewrite the sample average as
\begin{align}
    \widehat{\mu}^{\mathrm{SA}}_{a, T} = \frac{1}{T}\sum^T_{t=1}\frac{1}{\frac{1}{T}\sum^T_{t=1}\mathbbm{1}[A^{\mathrm{KCG}}_t = a]}\mathbbm{1}[A^{\mathrm{KCG}}_t = a]Y_t = \frac{1}{T}\sum^T_{t=1}\frac{1}{w^*_a}\mathbbm{1}[A^{\mathrm{KCG}}_t = a]Y_t,
\end{align}
where we used $\sum^T_{t=1}\mathbbm{1}[A^{\mathrm{KCG}}_t = a] = w^*_aT$. Here, we consider a strategy that estimates $w^*_a$ by estimating the variances. Let $\widetilde{w}_a$ be some estimator of $w^*_a$. Then, we design a strategy that draws arm $a$ in $\widetilde{w}_a T$ rounds in some way. In that case, the sample average roughly becomes
\begin{align}
    \widetilde{\mu}^{\mathrm{SA}}_{a, T} = \frac{1}{T}\sum^T_{t=1}\frac{1}{\frac{1}{T}\sum^T_{t=1}\mathbbm{1}[\widetilde{A}_t = a]}\mathbbm{1}[\widetilde{A}_t = a]Y_t = \frac{1}{T}\sum^T_{t=1}\frac{1}{\widetilde{w}_a}\mathbbm{1}[\widetilde{A}_t = a]Y_t,
\end{align}
where $\widetilde{A}_t$ denotes an arm drawn by some strategy that draws arm $a$ in $\widetilde{w}_a T$ rounds. 
Then, if we recommend arm $\widetilde{a}_T \coloneqq \argmax_{a\in\{1, 2\}} \widetilde{\mu}^{\mathrm{SA}}_{a, T}$ as the best arm, we evaluate 
\begin{align}
    \liminf_{T \to \infty} - \frac{1}{T}\log\mathbb{P}_{P^*}\left(\widetilde{a}_T \neq a^\star(P^*)\right) = \liminf_{T \to \infty} - \frac{1}{T}\log\mathbb{P}_{P^*}\left(\widetilde{\mu}^{\mathrm{SA}}_{1, T} - \widetilde{\mu}^{\mathrm{SA}}_{2, T} \leq 0\right),
\end{align}
From Markov's inequality, for any $\lambda > 0$, we have
\begin{align}
    \log\mathbb{P}_{P^*}\left(\widetilde{\mu}^{\mathrm{SA}}_{1, T} - \widetilde{\mu}^{\mathrm{SA}}_{2, T} \leq 0\right) \leq \mathbb{E}\left[\exp\left(T\lambda\left(\widetilde{\mu}^{\mathrm{SA}}_{1, T} - \widetilde{\mu}^{\mathrm{SA}}_{2, T}\right)\right)\right].
\end{align}
To obtain the same upper bound as that in the \citet{Kaufman2016complexity}'s strategy, we consider the following decomposition:
\begin{align}
\mathbb{E}\left[\exp\left(T\lambda\left(\widetilde{\mu}^{\mathrm{SA}}_{1, T} - \widetilde{\mu}^{\mathrm{SA}}_{2, T}\right)\right)\right] = \mathbb{E}\left[\exp\left(T\lambda\left(\widehat{\mu}^{\mathrm{SA}}_{1, T} - \widehat{\mu}^{\mathrm{SA}}_{2, T}\right)\right)\exp\left(T\lambda\left(\left\{\widetilde{\mu}^{\mathrm{SA}}_{1, T} - \widetilde{\mu}^{\mathrm{SA}}_{2, T}\right\} - \left\{\widehat{\mu}^{\mathrm{SA}}_{1, T} - \widehat{\mu}^{\mathrm{SA}}_{2, T}\right\}\right)\right)\right].
\end{align}
Suppose that the following holds in some way:
\begin{align*}
    &\mathbb{E}\left[\exp\left(T\lambda\left(\widehat{\mu}^{\mathrm{SA}}_{1, T} - \widehat{\mu}^{\mathrm{SA}}_{2, T}\right)\right)\exp\left(T\lambda\left(\left\{\widetilde{\mu}^{\mathrm{SA}}_{1, T} - \widetilde{\mu}^{\mathrm{SA}}_{2, T}\right\} - \left\{\widehat{\mu}^{\mathrm{SA}}_{1, T} - \widehat{\mu}^{\mathrm{SA}}_{2, T}\right\}\right)\right)\right]\\
    &= \mathbb{E}\left[\exp\left(T\lambda\left(\widehat{\mu}^{\mathrm{SA}}_{1, T} - \widehat{\mu}^{\mathrm{SA}}_{2, T}\right)\right)\right]\mathbb{E}\left[\exp\left(T\lambda\left(\left\{\widetilde{\mu}^{\mathrm{SA}}_{1, T} - \widetilde{\mu}^{\mathrm{SA}}_{2, T}\right\} - \left\{\widehat{\mu}^{\mathrm{SA}}_{1, T} - \widehat{\mu}^{\mathrm{SA}}_{2, T}\right\}\right)\right)\right].
\end{align*}
Note that this decomposition does not generally hold, but we assume it since it makes it easy to understand the variance estimation problem.
Under the assumption, it holds that
\begin{align}
    & - \frac{1}{T}\log\mathbb{P}_{P^*}\left(\widetilde{a}_T \neq a^\star(P^*)\right)\\
    &\geq  - \frac{1}{T}\log\mathbb{E}\left[\exp\left(T\lambda\left(\widehat{\mu}^{\mathrm{SA}}_{1, T} - \widehat{\mu}^{\mathrm{SA}}_{2, T}\right)\right)\right] - \frac{1}{T}\log \mathbb{E}\left[\exp\left(T\lambda\left(\left\{\widetilde{\mu}^{\mathrm{SA}}_{1, T} - \widetilde{\mu}^{\mathrm{SA}}_{2, T}\right\} - \left\{\widehat{\mu}^{\mathrm{SA}}_{1, T} - \widehat{\mu}^{\mathrm{SA}}_{2, T}\right\}\right)\right)\right]\\
    &\geq  \frac{\Delta^2}{2(\sigma_1 + \sigma_2)^2} - \frac{1}{T}\log \mathbb{E}\left[\exp\left(T\lambda\left(\left\{\widetilde{\mu}^{\mathrm{SA}}_{1, T} - \widetilde{\mu}^{\mathrm{SA}}_{2, T}\right\} - \left\{\widehat{\mu}^{\mathrm{SA}}_{1, T} - \widehat{\mu}^{\mathrm{SA}}_{2, T}\right\}\right)\right)\right].
\end{align}
This inequality implies that to obtain the same upper bound as that of \citet{Kaufman2016complexity}'s strategy, we need to bound 
\begin{align}
    \liminf_{T \to \infty} - \frac{1}{T}\log \mathbb{E}\left[\exp\left(T\lambda\left(\left\{\widetilde{\mu}^{\mathrm{SA}}_{1, T} - \widetilde{\mu}^{\mathrm{SA}}_{2, T}\right\} - \left\{\widehat{\mu}^{\mathrm{SA}}_{1, T} - \widehat{\mu}^{\mathrm{SA}}_{2, T}\right\}\right)\right)\right]
\end{align}
with an arbitrage rate of convergence; more exactly, we need to show that for any $\varepsilon > 0$, 
\[\liminf_{T \to \infty} - \frac{1}{T}\log \mathbb{E}\left[\exp\left(T\lambda\left(\left\{\widetilde{\mu}^{\mathrm{SA}}_{1, T} - \widetilde{\mu}^{\mathrm{SA}}_{2, T}\right\} - \left\{\widehat{\mu}^{\mathrm{SA}}_{1, T} - \widehat{\mu}^{\mathrm{SA}}_{2, T}\right\}\right)\right)\right] \geq -\varepsilon\] holds. However, it is impossible to achieve that convergence rate with commonly known theorems about convergence. Therefore, we introduced the small-gap regime, which evaluates the term as
\[\liminf_{T \to \infty} - \frac{1}{T}\log \mathbb{E}\left[\exp\left(T\lambda\left(\left\{\widetilde{\mu}^{\mathrm{SA}}_{1, T} - \widetilde{\mu}^{\mathrm{SA}}_{2, T}\right\} - \left\{\widehat{\mu}^{\mathrm{SA}}_{1, T} - \widehat{\mu}^{\mathrm{SA}}_{2, T}\right\}\right)\right)\right] = - o(\Delta^2).\] 
Note that this argument is not rigorous and is simplified for explanation.

\subsection{The AIPW, IPW, and Sample Average Estimators}
\label{sec:aipw_est}
A key component of our analysis is the AIPW estimator, which comprises an MDS and boasts minimum asymptotic variance. By using the properties of an MDS, we tackle the dependence among observations. The upper bound can also be applied to the Inverse Probability Weighting (IPW) estimator, but in this case, the upper bound may not coincide with the lower bound. This discrepancy occurs because the AIPW estimator's asymptotic variance is smaller than the IPW estimator's. The minimum variance property of the AIPW estimator stems from the efficient influence function \citep{hahn1998role, Tsiatis2007semiparametric}.

We conjecture that the asymptotic optimality of strategies employing the naive sample average estimator in the recommendation rule can be demonstrated, although we do not prove it in this study. This is because \citet{Hahn2011} shows that, using the CLT, the AIPW and sample average estimators have the same asymptotic distribution. However, due to the inability to utilize MDS properties and the presence of sample dependency, the analysis becomes challenging when we derive a corresponding result for a large deviation (exponential rate of the probability of misidentification). 

For the reader's reference, we detail the problems related to the IPW estimator and the sample average estimator. 

\paragraph{The NA-IPW strategy.} We consider the following strategy. In the NA-AIPW strategy, instead of the AIPW estimator, we use the following IPW estimator to estimate the means:
\begin{align}
&\widehat{\mu}^{\mathrm{IPW}}_{a, T} \coloneqq\frac{1}{T} \sum^T_{t=1}\psi^{\mathrm{IPW}}_{a, t},\qquad \mathrm{where}\ \psi^{\mathrm{IPW}}_{a, t} \coloneqq \frac{\mathbbm{1}[A_t = a]Y_{a, t}}{\widehat{w}_{a,t}}.
\end{align}
At the end of the experiment (after the round $t=T$), we recommend $\widehat{a}^{\mathrm{IPW}}_T$ as 
\begin{align}
\widehat{a}^{\mathrm{IPW}}_T \coloneqq \begin{cases}
 &  1\quad \mathrm{if}\quad \widehat{\mu}^{\mathrm{IPW}}_{1, T} \ge \widehat{\mu}^{\mathrm{IPW}}_{2, T}, 
 \\
 & 2\quad \mathrm{otherwise}.
\end{cases}
\end{align}
We refer to this strategy as the NA-IPW strategy, whose probability of misidentification of this strategy is given as follows.
\begin{theorem}[Upper bound of the NA-IPW strategy]\label{thm:suboptimal}
For any $P^*\in\mathcal{P}^{\mathrm{G}}$, the following holds as $\Delta \to 0$:
\begin{align*}
    &\liminf_{T \to \infty} - \frac{1}{T}\log \mathbb{P}_{P^*}\left(\widehat{a}^{\mathrm{IPW}}_T \neq a^\star(P^*)\right) \geq \frac{\Delta^2}{2(\sigma_1 + \sigma_2)\left(\frac{\zeta^*_1}{\sigma_1} + \frac{\zeta^*_2}{\sigma_2}\right)}- o\left(\Delta^2\right),
\end{align*}
where $\zeta^*_a \coloneqq \mathbb{E}_{P^*}\big[Y^2_{a, t}\big]$. 
\end{theorem}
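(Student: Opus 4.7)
The plan is to mirror the five-step argument in the proof of Theorem~\ref{thm:optimal}, replacing the AIPW score by $\psi^{\mathrm{IPW}}_{a,t} = \mathbbm{1}[A_t=a]Y_{a,t}/\widehat{w}_{a,t}$ and recomputing only the limiting conditional variance. Take $a^\star(P^*)=1$ without loss of generality, and set
\[V^{\mathrm{IPW}} \coloneqq \frac{\zeta^*_1}{w^*_1} + \frac{\zeta^*_2}{w^*_2} = (\sigma_1+\sigma_2)\Big(\frac{\zeta^*_1}{\sigma_1}+\frac{\zeta^*_2}{\sigma_2}\Big), \qquad \Psi^{\mathrm{IPW}}_t \coloneqq \frac{\psi^{\mathrm{IPW}}_{1,t} - \psi^{\mathrm{IPW}}_{2,t} - \Delta}{\sqrt{V^{\mathrm{IPW}}}}.\]
The goal is then to upper bound $\mathbb{P}_{P^*}\bigl(\tfrac{1}{T}\sum_t \Psi^{\mathrm{IPW}}_t \le -\Delta/\sqrt{V^{\mathrm{IPW}}}\bigr)$, which equals the misidentification probability after the standard reduction to the event $\{\widehat{\mu}^{\mathrm{IPW}}_{1,T} \le \widehat{\mu}^{\mathrm{IPW}}_{2,T}\}$.

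Steps~1, 2, 4, and~5 of the proof of Theorem~\ref{thm:optimal} carry over with only cosmetic changes. The MDS property $\mathbb{E}_{P^*}[\Psi^{\mathrm{IPW}}_t \mid \mathcal{F}_{t-1}] = 0$ holds because $A_t$ is drawn with probability $\widehat{w}_{a,t}$ conditional on $\mathcal{F}_{t-1}$ and is independent of $Y_{a,t}$, so $\mathbb{E}_{P^*}[\psi^{\mathrm{IPW}}_{a,t}\mid \mathcal{F}_{t-1}] = \mu^*_a$. The Chernoff bound followed by the Taylor expansion
\[\log \mathbb{E}_{P^*}\bigl[\exp(\lambda \Psi^{\mathrm{IPW}}_t)\mid \mathcal{F}_{t-1}\bigr] = \tfrac{\lambda^2}{2}\, \mathbb{E}_{P^*}\bigl[(\Psi^{\mathrm{IPW}}_t)^2\mid \mathcal{F}_{t-1}\bigr] + o(\lambda^2) \quad \text{as } \lambda \to 0\]
both transfer verbatim; finiteness of $\mathbb{E}_{P^*}[(\Psi^{\mathrm{IPW}}_t)^k \exp(\lambda \Psi^{\mathrm{IPW}}_t)\mid \mathcal{F}_{t-1}]$ for $k \le 3$ on an open interval around $\lambda = 0$ is secured by the Gaussianity of $Y_{a,t}$ together with the truncation $\widehat{w}_{a,t} \ge C_{\sigma^2}$.

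The one genuinely new calculation is the conditional second moment. Because $\mathbbm{1}[A_t=1]\,\mathbbm{1}[A_t=2] = 0$, the cross term $\psi^{\mathrm{IPW}}_{1,t}\psi^{\mathrm{IPW}}_{2,t}$ vanishes, and $\mathbb{E}_{P^*}[\mathbbm{1}[A_t=a]Y_{a,t}^2/\widehat{w}_{a,t}^2\mid \mathcal{F}_{t-1}] = \zeta^*_a/\widehat{w}_{a,t}$, yielding
\[\mathbb{E}_{P^*}\bigl[(\psi^{\mathrm{IPW}}_{1,t}-\psi^{\mathrm{IPW}}_{2,t}-\Delta)^2\mid \mathcal{F}_{t-1}\bigr] = \frac{\zeta^*_1}{\widehat{w}_{1,t}} + \frac{\zeta^*_2}{\widehat{w}_{2,t}} - \Delta^2.\]
By Lemma~\ref{lem:almost_sure_nuisance} and the continuous mapping theorem, $\widehat{w}_{a,t}\xrightarrow{\mathrm{a.s.}} w^*_a$, hence $\mathbb{E}_{P^*}[(\Psi^{\mathrm{IPW}}_t)^2\mid \mathcal{F}_{t-1}] \xrightarrow{\mathrm{a.s.}} 1 - \Delta^2/V^{\mathrm{IPW}}$. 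The Cesàro-type Lemma~\ref{lem:conv_prob_second} transfers verbatim since it relies only on this a.s.\ convergence together with uniform boundedness (still guaranteed by the truncation).

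Substituting into the Chernoff bound and optimizing at $\lambda = v = -\Delta/\sqrt{V^{\mathrm{IPW}}}$ gives, for any $\epsilon > 0$ and all sufficiently large $T$,
\[-\frac{1}{T}\log \mathbb{P}_{P^*}\bigl(\widehat{a}^{\mathrm{IPW}}_T \neq a^\star(P^*)\bigr) \ge \frac{\Delta^2}{2V^{\mathrm{IPW}}}\Big(1 + \frac{\Delta^2}{V^{\mathrm{IPW}}} - \epsilon\Big) - o\Big(\frac{\Delta^2}{V^{\mathrm{IPW}}}\Big).\]
Letting $T \to \infty$, then $\epsilon \to 0$, and finally $\Delta \to 0$ absorbs the $\Delta^4$ and $\epsilon$ contributions into $-o(\Delta^2)$ and yields the claimed bound. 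The main (mild) obstacle is simply recognizing that the absence of the augmentation forces $\zeta^*_a$ in place of the variance $\sigma_a^2$ that appears after cancellation in the AIPW analysis, enlarging the normalizer from $V = (\sigma_1+\sigma_2)^2$ to $V^{\mathrm{IPW}} \ge V$ (since $\zeta^*_a \ge \sigma_a^2$) and producing the strictly weaker, suboptimal exponent of the NA-IPW strategy.
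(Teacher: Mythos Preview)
Your proposal is correct and follows essentially the same route as the paper's proof: define the IPW-normalized martingale increments, compute the conditional second moment $\zeta^*_1/\widehat{w}_{1,t}+\zeta^*_2/\widehat{w}_{2,t}-\Delta^2$, invoke the almost-sure convergence of $\widehat{w}_{a,t}$, and then plug into the Chernoff--Taylor machinery of Theorem~\ref{thm:optimal}. The only cosmetic difference is that the paper absorbs the $-\Delta^2$ into the normalizing constant (defining $V^{\mathrm{IPW}}=\zeta^*_1/w^*_1+\zeta^*_2/w^*_2-\Delta^2$ so that the limiting conditional second moment is exactly $1$), whereas you keep $V^{\mathrm{IPW}}=\zeta^*_1/w^*_1+\zeta^*_2/w^*_2$ and carry the residual $-\Delta^2/V^{\mathrm{IPW}}$ through to the end before folding it into $o(\Delta^2)$; both are equivalent.
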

\begin{proof}
    Let us define $V^{\mathrm{IPW}} \coloneqq \frac{\zeta^*_1}{w^*_1} + \frac{\zeta^*_2}{w^*_2}  - \Delta^2$ and $\Psi^{\mathrm{IPW}} \coloneqq \Big\{\psi^{\mathrm{IPW}}_{1, t} - \psi^{\mathrm{IPW}}_{2, t}- \Delta\Big\} / V^{\mathrm{IPW}}$. Then, we have
\begin{align*}
    &\mathbb{E}_{P^*}\left[\left(\Psi^{\mathrm{IPW}}_t\right)^2|\mathcal{F}_{t-1}\right]= \mathbb{E}_{P^*}\left[\left(\psi^{\mathrm{IPW}}_{1, t} - \psi^{\mathrm{IPW}}_{2, t}- \Delta\right)^2\Big| \mathcal{F}_{t-1}\right]
    \\
    &= \mathbb{E}_{P^*}\Bigg[\Bigg(\frac{\mathbbm{1}[A_{t} = 1]Y_{1,t}}{\widehat{w}_{1,t}} - \frac{\mathbbm{1}[A_t = 2]Y_{2,t}}{\widehat{w}_{2,t}} - \Delta\Bigg)^2 |  \mathcal{F}_{t-1}\Bigg]
    \\
    &= \mathbb{E}_{P^*}\Bigg[\Bigg(\frac{\mathbbm{1}[A_{t} = 1]Y_{1,t}}{\widehat{w}_{1,t}} - \frac{\mathbbm{1}[A_t = 2]Y_{2,t}}{\widehat{w}_{2,t}}\Bigg)^2 |  \mathcal{F}_{t-1}\Bigg] - \Delta^2\\
    &= \mathbb{E}_{P^*}\Bigg[\frac{Y^2_{1,t}}{\widehat{w}_{1,t}} - \frac{Y^2_{2,t}}{\widehat{w}_{2,t}} |  \mathcal{F}_{t-1}\Bigg] - \Delta^2 \to V^{\mathrm{IPW}}
\end{align*}
as $T \to \infty$. By replacing $V$ and $\Psi$ in the proof of Theorem~\ref{thm:optimal} with $V^{\mathrm{IPW}}$ and $\Psi^{\mathrm{IPW}}$, we obtain 
\begin{align*}
    &\liminf_{T \to \infty} - \frac{1}{T}\log \mathbb{P}_{P^*}\left(\widehat{a}^{\mathrm{IPW}}_T \neq a^\star(P^*)\right) \geq \frac{\Delta^2}{2\left(\frac{\zeta^*_1}{w^*_1} + \frac{\zeta^*_2}{w^*_2} - \Delta^2\right)}- o\left(\Delta^2\right),
\end{align*}
where the RHS is equal to $\frac{\Delta^2}{2\left(\frac{\zeta^*_1}{w^*_1} + \frac{\zeta^*_2}{w^*_2}\right)}- o\left(\Delta^2\right)$. The proof is complete. 
\end{proof}
Note that $2(\sigma_1 + \sigma_2)\left(\frac{\zeta^*_1}{\sigma_1} + \frac{\zeta^*_2}{\sigma_2}\right) \geq 2(\sigma_1 + \sigma_2)^2$ since $\left(\frac{\zeta^*_1}{\sigma_1} + \frac{\zeta^*_2}{\sigma_2}\right) \geq \left(\frac{\zeta^*_1 - (\mu^*_1)^2}{\sigma_1} + \frac{\zeta^*_2 - (\mu^*_2)^2}{\sigma_2}\right) = (\sigma_1 + \sigma_2)$. Therefore, the upper bound of probability of misidentification of the NA-IPW strategy is larger than that of the NA-AIPW strategy (Note that the inequality is flipped due to $-\frac{1}{T}\log \mathbb{P}_{P^*}$; that is, $\frac{\Delta^2}{2(\sigma_1 + \sigma_2)^2} \geq \frac{\Delta^2}{2(\sigma_1 + \sigma_2)\left(\frac{\zeta^*_1}{\sigma_1} + \frac{\zeta^*_2}{\sigma_2}\right)}$ implies that the upper bound of the probability of misidentification of the NA-AIPW strategy is smaller than that of the NA-IPW strategy). 
In the case of the evaluation using the CLT, similar results have been known in existing studies, such as \citet{hirano2003} and \citet{Kato2020adaptive}.

\paragraph{The NA-SA strategy.}
Next, we consider the following strategy. In the NA-AIPW strategy, instead of the AIPW estimator, we use the following sample average estimator:
\begin{align}
&\widehat{\mu}^{\mathrm{SA}}_{a, T} \coloneqq\frac{1}{\sum^T_{t=1}\mathbbm{1}[A_t = a]} \sum^T_{t=1}\mathbbm{1}[A_t = a]Y_t = \frac{1}{T} \sum^T_{t=1}\psi^{\mathrm{SA}}_{a, t},\qquad \mathrm{where}\ \psi^{\mathrm{SA}}_{a, t} \coloneqq \frac{\mathbbm{1}[A_t = a]Y_{t}}{\frac{1}{T}\sum^T_{t=1}\mathbbm{1}[A_t = a]}.
\end{align}
At the end of the experiment (after the round $t=T$), we recommend $\widehat{a}^{\mathrm{IPW}}_T$ as 
\begin{align}
\widehat{a}^{\mathrm{SA}}_T \coloneqq \begin{cases}
 &  1\quad \mathrm{if}\quad \widehat{\mu}^{\mathrm{SA}}_{1, T} \ge \widehat{\mu}^{\mathrm{SA}}_{2, T}, 
 \\
 & 2\quad \mathrm{otherwise}.
\end{cases}
\end{align}
We refer to this strategy as the NA-SA strategy. Evaluation of the probability of misidentification of this strategy is not easy since we cannot employ a martingale property, which has been used in the analysis of the NA-AIPW strategy and the NA-IPW strategy. In order to derive its upper bound, we need to evaluate $\widehat{\mu}^{\mathrm{SA}}_{1, T} - \widehat{\mu}^{\mathrm{SA}}_{2, T}$. Here, note that
\begin{align}
\widehat{\mu}^{\mathrm{SA}}_{1, T} - \widehat{\mu}^{\mathrm{SA}}_{2, T}&=  \widehat{\mu}^{\mathrm{SA}}_{1, T} - \widehat{\mu}^{\mathrm{SA}}_{2, T} - \left\{\frac{\mathbbm{1}[A_t = 1]\big(Y_{1, t} - \mu^*_1\big)}{\widehat{w}_{1,t}} - \frac{\mathbbm{1}[A_t = 2]\big(Y_{2, t} - \mu^*_2\big)}{\widehat{w}_{2,t}} - \Delta\right\}\\
    &\ \ \ \ \ \ \ \ \ + \left\{\frac{\mathbbm{1}[A_t = 1]\big(Y_{1, t} - \mu^*_1\big)}{\widehat{w}_{1,t}} - \frac{\mathbbm{1}[A_t = 2]\big(Y_{2, t} - \mu^*_2\big)}{\widehat{w}_{2,t}} - \Delta\right\}
\end{align}
holds, and the variance of $\Psi^*_t \coloneqq \left\{\frac{\mathbbm{1}[A_t = 1]\big(Y_{1, t} - \mu^*_1\big)}{\widehat{w}_{1,t}} - \frac{\mathbbm{1}[A_t = 2]\big(Y_{2, t} - \mu^*_2\big)}{\widehat{w}_{2,t}} - \Delta\right\}$ is $V$ in the proof of Theorem~\ref{thm:optimal}, and $\{\Psi^*_t\}^T_{t=1}$ consists of an MDS. Therefore, if $\widehat{\mu}^{\mathrm{SA}}_{1, T} - \widehat{\mu}^{\mathrm{SA}}_{2, T} - \left\{\frac{\mathbbm{1}[A_t = 1]\big(Y_{1, t} - \mu^*_1\big)}{\widehat{w}_{1,t}} - \frac{\mathbbm{1}[A_t = 2]\big(Y_{2, t} - \mu^*_2\big)}{\widehat{w}_{2,t}} - \Delta\right\} = 0$, we can directly apply the proof of Theorem~\ref{thm:optimal} to obtain the same upper bound in Theorem~\ref{thm:optimal}. However, $\widehat{\mu}^{\mathrm{SA}}_{1, T} - \widehat{\mu}^{\mathrm{SA}}_{2, T} - \left\{\frac{\mathbbm{1}[A_t = 1]\big(Y_{1, t} - \mu^*_1\big)}{\widehat{w}_{1,t}} - \frac{\mathbbm{1}[A_t = 2]\big(Y_{2, t} - \mu^*_2\big)}{\widehat{w}_{2,t}} - \Delta\right\}$ is not zero and remains as a bias term, and it is known that its evaluation requires several techniques. For example, to show $\sqrt{T}\left(\widehat{\mu}^{\mathrm{SA}}_{1, T} - \widehat{\mu}^{\mathrm{SA}}_{2, T} - \Delta\right) \xrightarrow{\mathrm{d}}\mathcal{N}(0, V)$, \citet{Hahn2011} bounds $\widehat{\mu}^{\mathrm{SA}}_{1, T} - \widehat{\mu}^{\mathrm{SA}}_{2, T} - \left\{\frac{\mathbbm{1}[A_t = 1]\big(Y_{1, t} - \mu^*_1\big)}{\widehat{w}_{1,t}} - \frac{\mathbbm{1}[A_t = 2]\big(Y_{2, t} - \mu^*_2\big)}{\widehat{w}_{2,t}} - \Delta\right\}$ using the property of the stochastic equicontinuity, which is based on the arguments in \citet{hirano2003}. This problem is related to the use of the Donsker condition in semiparametric analysis, as explained in \citet{kennedy2016semiparametric}. We may show the upper bound of the NA-SA strategy by using a similar approach used in \citet{Hahn2011}, but there are two issues. First, it is unknown what condition corresponds to the stochastic equicontinuity in the setting of BAI, where the samples are dependent. Second, it is unclear whether we can directly apply the stochastic equicontinuity or similar properties to show the large deviation upper bound since such conditions have been used for the central limit evaluation. Therefore, although the findings of \citet{hirano2003} and \citet{Hahn2011} may aid in resolving this issue, it is an open issue how we use it. Note that this issue caused by the bias of $\widehat{\mu}^{\mathrm{SA}}_{1, T} - \widehat{\mu}^{\mathrm{SA}}_{2, T}$, which is non-zero. Also note that in contrast, the bias of $\widehat{\mu}^{\mathrm{AIPW}}_{1, T} - \widehat{\mu}^{\mathrm{AIPW}}_{2, T}$ is zero due to the properties of an MDS. 

\subsection{The Tracking Strategy} In fixed-confidence BAI, the tracking strategy is popular, as used in \citet{Garivier2016}. In the existing studies of the Neyman allocation, such a strategy has been used. For example, \citet{Hahn2011} splits the whole samples into two groups. In the first stage, we uniformly randomly draw each arm and estimate $w^*_a$. In the second stage, for the estimators $\widehat{w}_a$ of $w^*_a$ we draw each arm so that $\frac{1}{T}\sum^T_{t=1}\mathbbm{1}[A_t = a] = \widehat{w}_a$ holds. Then, \citet{Hahn2011} estimates $\Delta = \mu^*_1 - \mu^*_2$ using the sample average estimator. This strategy is quite similar to that in \citet{Garivier2016}, since it draws arms to track the ratio of $w^*_a$.

However, the strategy of \citet{Hahn2011} makes analyzing upper bounds difficult. As we explained in Section~\ref{sec:aipw_est}, in our analysis, the unbiasedness of the AIPW estimator plays an important role. In contrast, if we use the tracking strategy, we cannot employ the property of $\mathbb{E}_{P^*}\left[\frac{\mathbbm{1}[A_t = a]}{\widehat{w}_t}|\mathcal{F}_{t-1}\right] = 1$. Note that the NA-AIPW strategy draws arm $A_t$ with probability $\widehat{w}_t$, but the tracking strategy draws arm $A_t$ more complicatedly, under which we cannot use the martingale property. 

As well as we explained in Section~\ref{sec:aipw_est}, the bias term makes the analysis significantly difficult. According to the existing studies, we need to use some techniques for the analysis, such as the Donsker condition \citep{hirano2003,Hahn2011}. Existing studies have proposed using the AIPW estimator to avoid this issue, as shown in \citet{Laan2008TheCA} and \citet{Kato2020adaptive}. 

Thus, although we acknowledge the possibility of using the tracking strategy, the analysis requires some sophisticated techniques. We expect that existing studies such as  \citep{hirano2003} and \citet{Hahn2011} will help the analysis, but it is still an open issue. Note that even in the tracking strategy, existing strategy such as \citep{hirano2003} and \citet{Hahn2011} bypass the evaluation of the AIPW-type estimators in the analysis. This proof procedure is also related to the semiparmaetric efficiency bound \citep{hahn1998role}, under which the semiparametric efficient score is given as $\Psi^*_t = \left\{\frac{\mathbbm{1}[A_t = 1]\big(Y_{1, t} - \mu^*_1\big)}{\widehat{w}_{1,t}} - \frac{\mathbbm{1}[A_t = 2]\big(Y_{2, t} - \mu^*_2\big)}{\widehat{w}_{2,t}} - \Delta\right\}$. 

\color{black}
\section{Related Work} 
\label{sec:related}
This section presents related works. 

\subsection{On the Asymptotic Optimality in Fixed-Budget BAI}
There is a long debate on the optimal strategies for fixed-budget BAI. \citet{glynn2004large} develops their strategies by using the large deviation principles. However, while they justify their strategies using the large deviation principles, they do not provide lower bounds for strategies. Therefore, there remains a question about whether their strategies are truly asymptotically optimal. 

\citet{Kaufman2016complexity} establishes distribution-dependent lower bounds for BAI with fixed confidence and budget, utilizing change-of-measure arguments. According to their results, we can confirm that for two-armed Gaussian bandits, the strategy of \citet{glynn2004large} is optimal. 

However, \citet{Kaufman2016complexity} leaves lower bounds for multi-armed fixed-budget BAI as an open issue. Based on the arguments of \citet{glynn2004large} and \citet{Russo2016}, \citet{Kasy2021} attempts to derive an asymptotically optimal strategy, but their attempt does not succeed. As pointed out by \citet{Ariu2021}, without additional assumptions, there exists an instance $P^*$ whose lower bound is larger than that of \citet{Kaufman2016complexity}. This result is based on another lower bound discovered by \citet{Carpentier2016}. These arguments are summarized by \citet{Qin2022open}. 

To address this issue, \citet{kato2023fixedbudgethyp} and \citet{degenne2023existence} consider a restriction such that sampling rules do not depend on $P^*$. Under this restriction, we can show the asymptotic optimality of the strategy provided by \citet{glynn2004large}, which requires full knowledge about $P^*$ and is practically infeasible. 

\citet{komiyama2022} and \citet{atsidakou2023bayesian} discuss asymptotically optimal strategies from minimax and Bayesian perspectives, respectively, where the leading factor ignoring some constant terms of lower and upper bounds match, unlike our optimality up to constant terms. This open issue is further explored by \citet{komiyama2022}, \citet{wang2023uniformly}, \citet{wang2023best}, and \citet{kato2023worstcase}.

Note that in the fixed confidence BAI setting, \citet{Garivier2016} proposes a strategy with an upper bound matching the derived lower bound. However, in the fixed-budget BAI, it remains unclear whether a strategy with an upper bound matching \citet{Kaufman2016complexity}'s lower bound exists. 

Alternative lower bounds have been proposed by \citet{Audibert2010}, \citet{Bubeck2011}, \citet{Komiyama2021} and \citet{Kato2023minimax} for the expected simple regret minimization, which is another performance measure different from the probability of misidentification.

Some research employs local asymptotics to examine the asymptotic optimality of the Neyman allocation rule in this context, such as \citet{Armstrong2022} and \citet{adusumilli2022minimax}. 

Ordinal optimization in the operations research community is another related field \citep{Dohyun2021,chen2000}. 

\subsection{Extension to BAI in Multi-Armed Bandit (MAB) Problems}
In contrast to two-armed bandit problems and BAI with fixed confidence, lower bounds for MAB problems remain unknown. One primary reason is the reversal of KL divergence. \citet{kato2023fixedbudgethyp}, \citet{degenne2023existence}, \citet{kato2023worstcase} consider strategies that use sampling rules that are (asymptotically) invariant for any $P^* \in \mathcal{P}^{\mathrm{G}}$. Such a class of strategies is sometimes called \emph{static} in the sense that it cannot estimate parameters during an adaptive experiment to avoid the dependency on $P^*$. However, if we consider Gaussian bandit models, sampling strategies that are invariant for $P^*$ do not imply non-adaptive (static) strategies because we can still adaptively estimate the variances during an adaptive experiment (the variances are assumed to be the same for any $P^*$).

\color{black}
\section{Simulation Studies}
\label{sec:exp_results}
This section provides simulation studies to investigate the empirical performance of the NA-AIPW strategy. For comparison, we also investigate the performances of the NA-IPW and NA-SA strategies defined in Section~\ref{sec:aipw_est}. Furthermore, we also conduct simulation studies of the ``oracle'' strategy with the known variances, denoted by Oracle, and the uniform strategy that draws an equal number of arms, denoted by Uniform. We recommend an arm with the highest sample average in the Oracle and Uniform strategies.\footnote{The oracle strategy is the one proposed by \citet{glynn2004large} and \citet{Kaufman2016complexity}.  The Uniform strategy with the recommendation rule is referred to as the Uniform-Empirical Best Arm (EBA) strategy by \citet{Bubeck2011}.}

Throughout the experiment, we set arm $1$ as the best arm. We conduct experiments with the three settings.

In the first experiment, we set $\mu^*_1 = 1.00$ and choose $\mu^*_2$ from the set $\{0.80, 0.85, 0.90, 0.95, 0.99\}$. The variances $(\sigma_1, \sigma_2)$ are selected with a probability of $1/2$ from either $(1, v_2)$ or $(v_2, 1)$, where $v_2$ is chosen from ${5, 10, 20, 50}$. We continue the strategies until $T = 10,000$ and report the empirical probability of misidentification at $T \in \{100, 200, 300, \dots, 9,900, 10,000\}$.\footnote{This means we report the empirical probability of misidentification at $T \in \{100, 200, 300, \ldots, 9,900, 10,000\}$.} We conduct $1,000$ independent trials for each choice of parameters and plot the results in Figures~\ref{fig:res1} and \ref{fig:res2}.

In the second experiment, the variances $(\sigma_1, \sigma_2)$ are selected with a probability of $1/2$ from either $(5, v_2)$ or $(v_2, 5)$, where $v_2$ is chosen from ${5, 10, 20, 50}$. The other settings are the same as the first experiment. The results are shown in Figures~\ref{fig:res3} and \ref{fig:res4}.

In the third experiment, we set $\mu^*_1 = 10.00$ and choose $\mu^*_2$ from the set $\{9.80, 9.85, 9.90, 9.95, 9.99\}$. The other settings are the same as the first experiment. The results are shown in Figures~\ref{fig:res5} and \ref{fig:res6}.

Our theoretical results imply that the probability of misidentifications of the NA-AIPW and Oracle strategies approach the same as $\Delta \to 0$. We can confirm the phenomenon. In the results, the Oracle strategy is a bit better than the NA-AIPW strategy when $\Delta$ is large. However, the gap approaches zero as $\Delta \to 0$. Note that when $\Delta$ is large, the convergence of the probability of misidentification is very fast, so the gap is still not so large even if $\Delta$ is large because both the probability of misidentifications of the NA-AIPW and Oracle strategies converge to zero very fast. 

We can also find that the performance improvement of the NA-AIPW strategy from the Uniform strategy is large as the difference of variances is large. For example, in the second experiment, when $(\sigma_1, \sigma_2) = (5. 5)$, there is no improvement by using the NA-AIPW strategy from the Uniform strategy because the NA allocation also leads us to draw each arm with equal ratio.

The difference between the NA-AIPW and NA-IPW strategies becomes large as the mean outcome of each arm becomes large. We can find that in the third setting, the NA-IPW strategy behaves badly since $\mu^*_1 = 10.00$ and $\mu^*_2$ is chosen from $\{9.80, 9.85, 9.90, 9.95, 9.99\}$, while $\mu^*_1 = 1.00$ and $\mu^*_2$ is chosen from $\{0.80, 0.85, 0.90, 0.95, 0.99\}$ in the first and second settings.

\begin{figure}[ht]
  \centering
  \includegraphics[height=150mm]{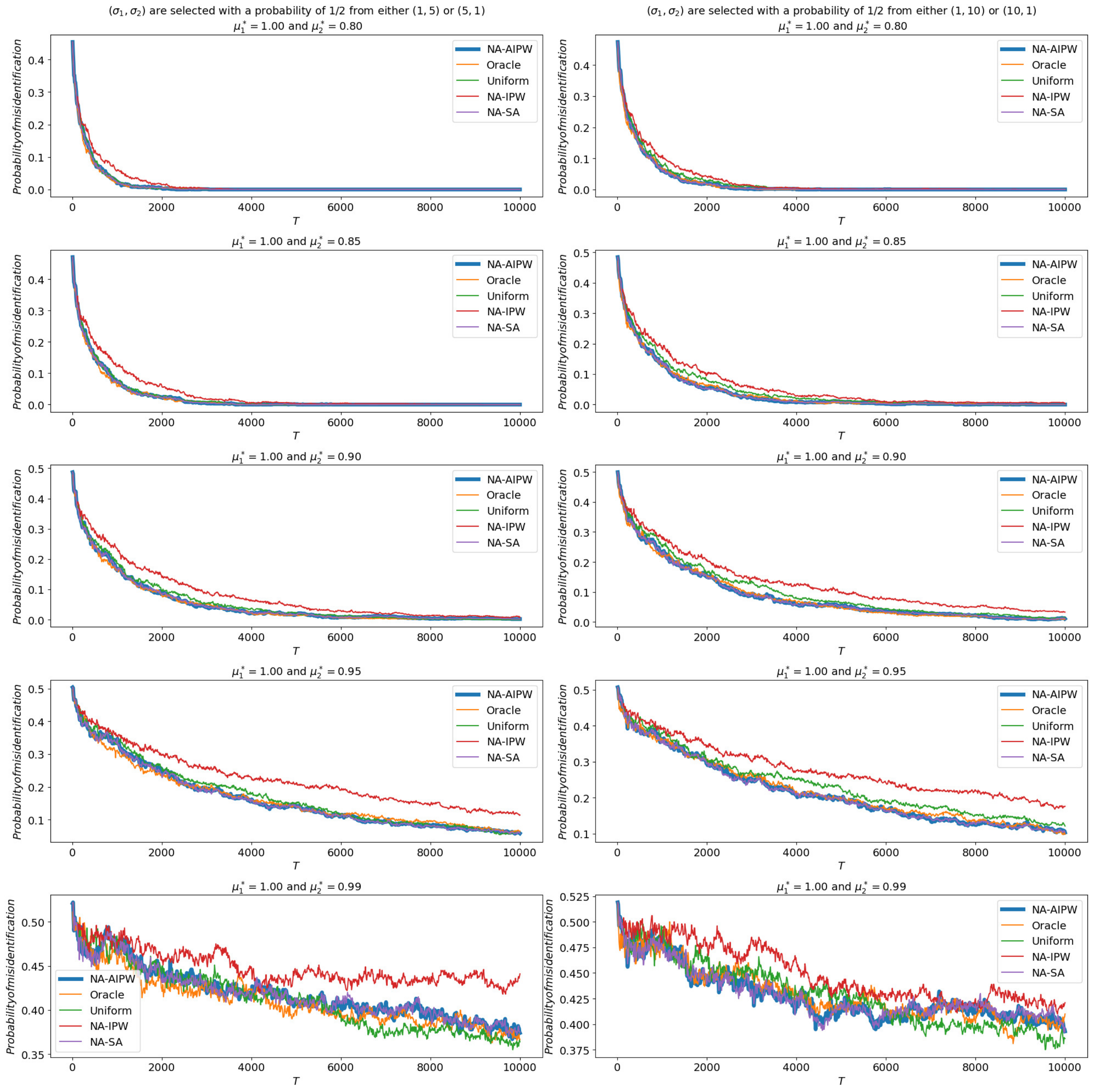}
  \caption{The results are under the first setting. We set $\mu^*_1 = 1.00$ and choose $\mu^*_2$ from the set $\{0.80, 0.85, 0.90, 0.95, 0.99\}$. The variances $(\sigma_1, \sigma_2)$ are selected with a probability of $1/2$ from either $(1, v_2)$ or $(v_2, 1)$, where $v_2$ is chosen from ${5, 10}$. We conduct $1,000$ independent trials and report the empirical probability of misidentification at $T \in \{100, 200, 300, \dots, 9,900, 10,000\}$.}
      \label{fig:res1}
\end{figure}

\begin{figure}[ht]
  \centering
  \includegraphics[height=150mm]{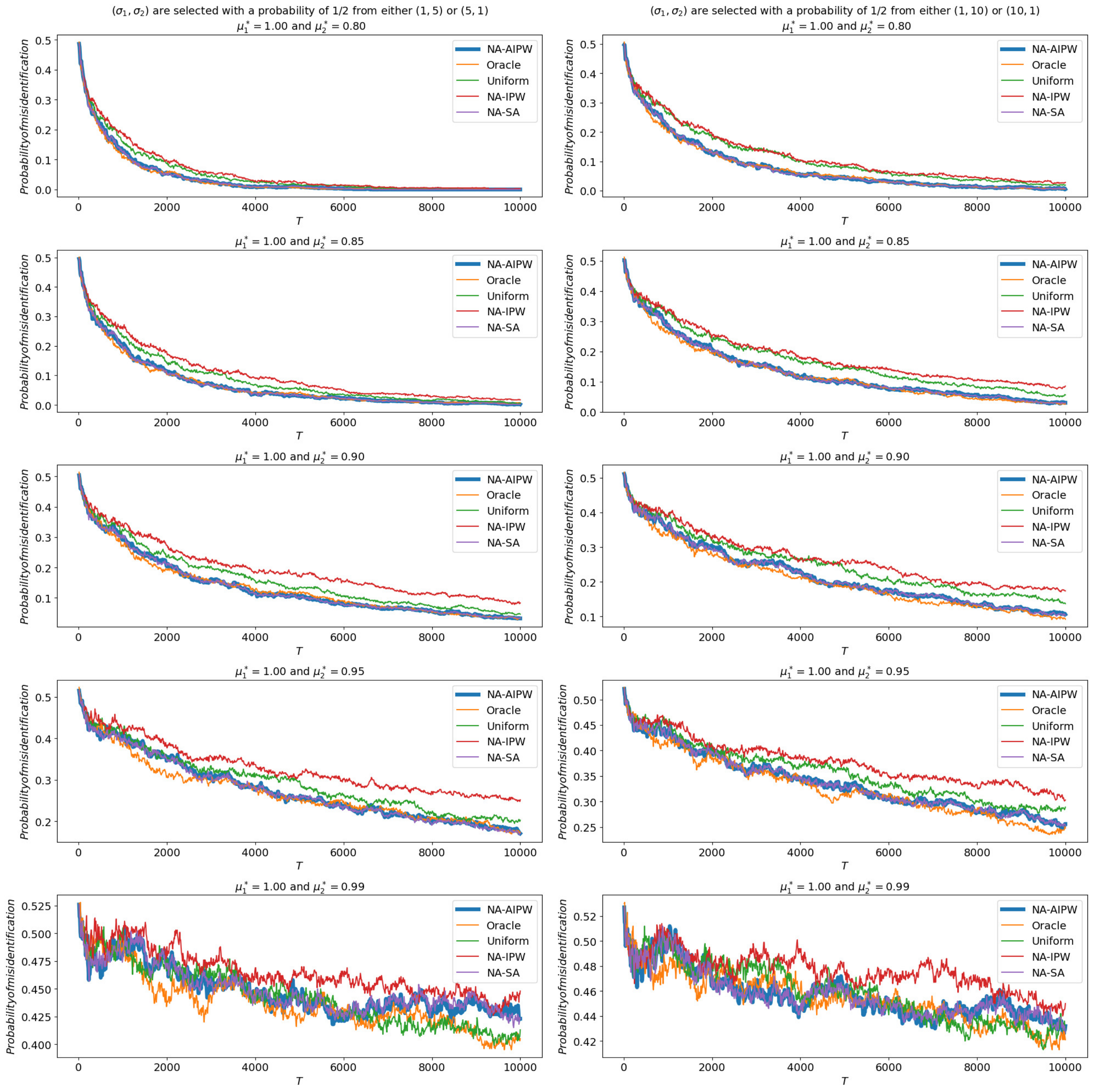}
  \caption{The results are under the first setting. We set $\mu^*_1 = 1.00$ and choose $\mu^*_2$ from the set $\{0.80, 0.85, 0.90, 0.95, 0.99\}$. The variances $(\sigma_1, \sigma_2)$ are selected with a probability of $1/2$ from either $(1, v_2)$ or $(v_2, 1)$, where $v_2$ is chosen from ${20, 50}$. We conduct $1,000$ independent trials and report the empirical probability of misidentification at $T \in \{100, 200, 300, \dots, 9,900, 10,000\}$.}
      \label{fig:res2}
\end{figure}

\color{black}

\section{Conclusion}
\label{sec:conclusion}
This study investigated fixed-budget BAI for two-armed Gaussian bandits with unknown variances. We first reviewed the lower bound shown by \citet{Kaufman2016complexity}. Then, we proposed the NA-AIPW strategy and found that its probability of misidentification matches the lower bound when the budget approaches infinity and the gap between the expected rewards of the two arms approaches zero. We referred to this setting as the small-gap regime and the optimality as the local asymptotic optimality. Although there are several remaining open questions, our result provides insight into long-standing open problems in BAI. 

\bibliography{arXiv} 
\bibliographystyle{tmlr}

\clearpage

\begin{figure}[ht]
  \centering
  \includegraphics[height=150mm]{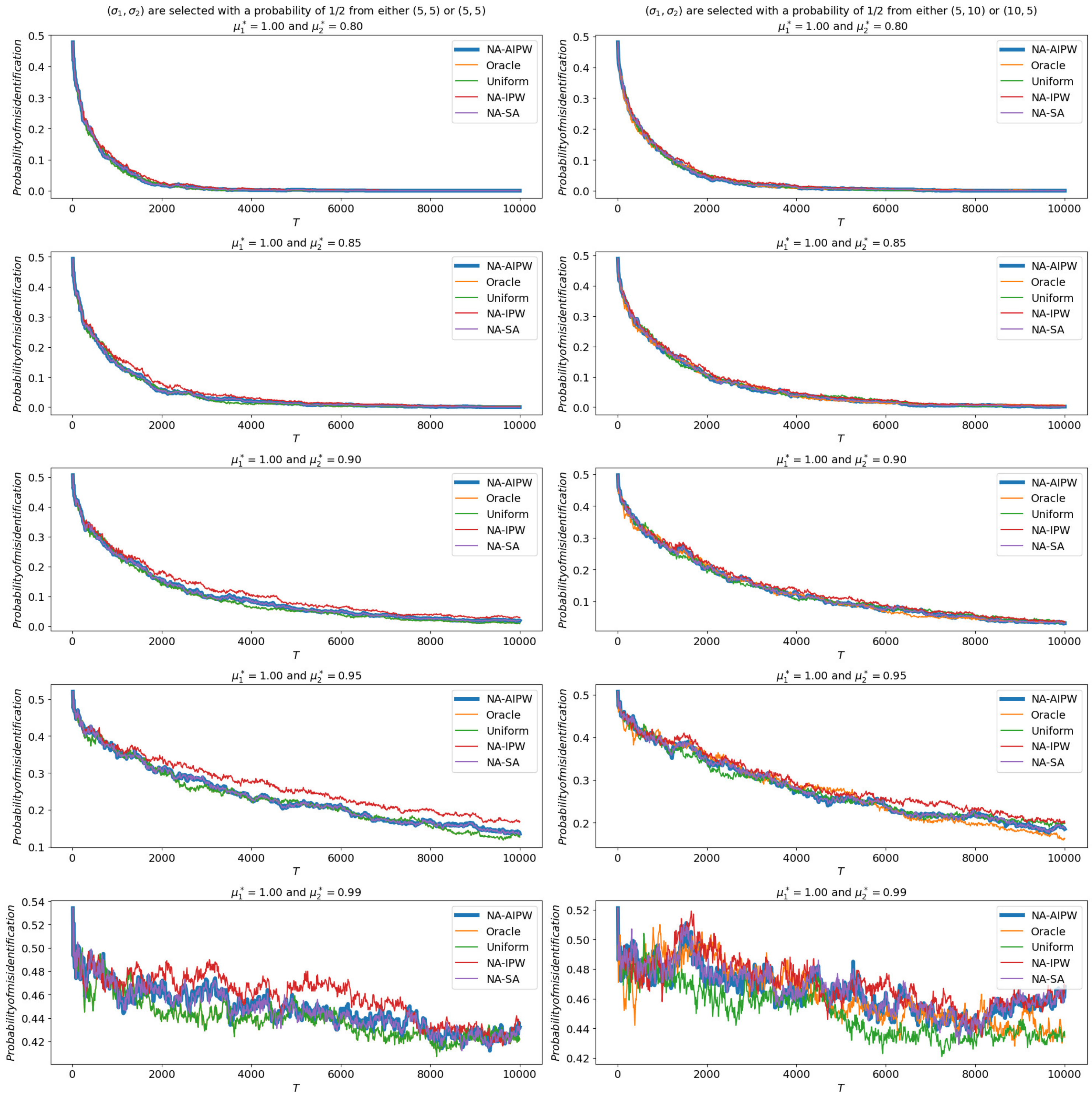}
  \caption{The results are under the first setting. We set $\mu^*_1 = 1.00$ and choose $\mu^*_2$ from the set $\{0.80, 0.85, 0.90, 0.95, 0.99\}$. The variances $(\sigma_1, \sigma_2)$ are selected with a probability of $1/2$ from either $(5, v_2)$ or $(v_2, 5)$, where $v_2$ is chosen from ${5, 10}$. We conduct $1,000$ independent trials and report the empirical probability of misidentification at $T \in \{100, 200, 300, \dots, 9,900, 10,000\}$.}
      \label{fig:res3}
\end{figure}

\begin{figure}[ht]
  \centering
  \includegraphics[height=150mm]{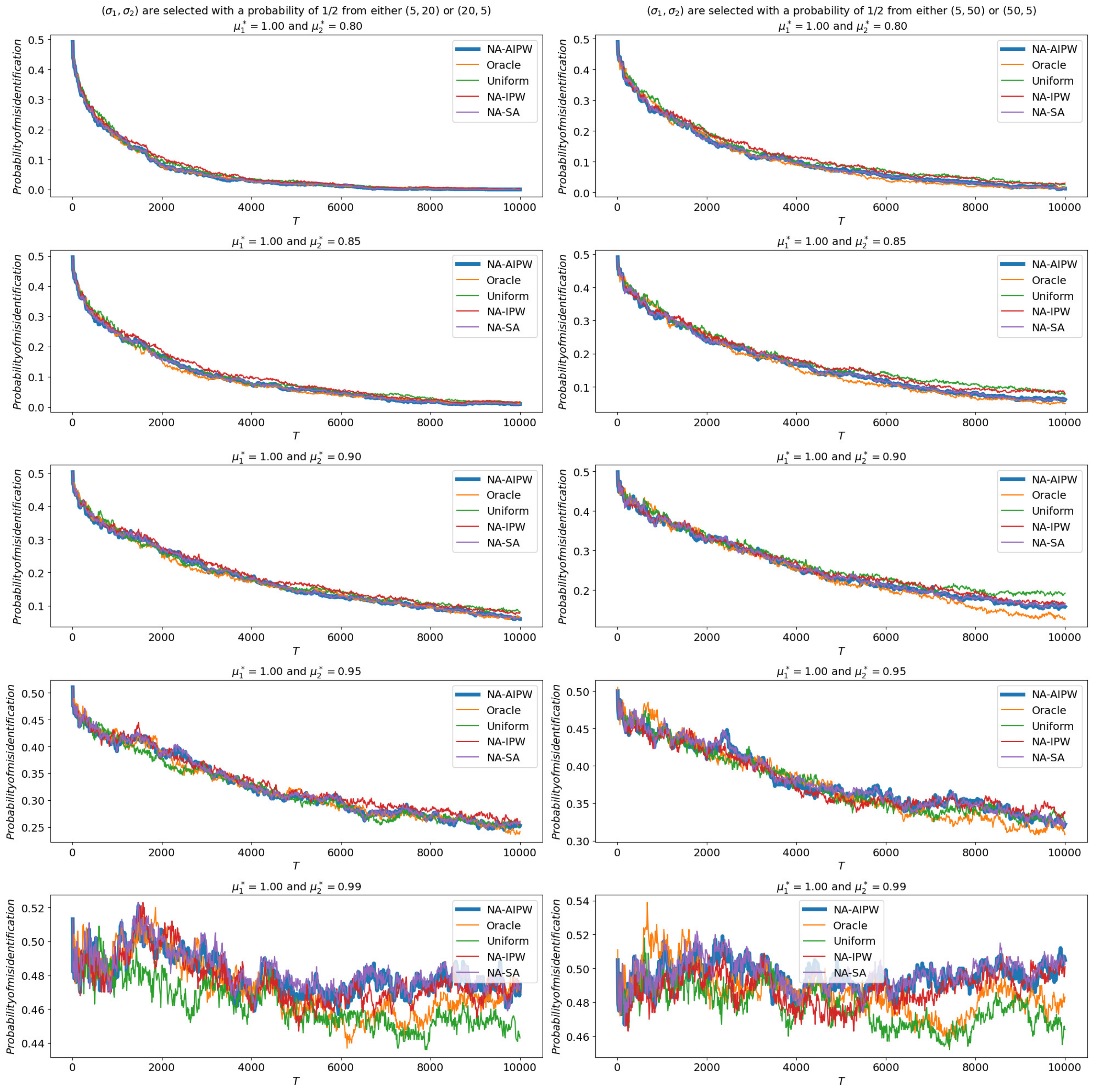}
  \caption{The results are under the first setting. We set $\mu^*_1 = 1.00$ and choose $\mu^*_2$ from the set $\{0.80, 0.85, 0.90, 0.95, 0.99\}$. The variances $(\sigma_1, \sigma_2)$ are selected with a probability of $1/2$ from either $(5, v_2)$ or $(v_2, 5)$, where $v_2$ is chosen from ${20, 50}$. We conduct $1,000$ independent trials and report the empirical probability of misidentification at $T \in \{100, 200, 300, \dots, 9,900, 10,000\}$.}
      \label{fig:res4}
\end{figure}

\begin{figure}[ht]
  \centering
  \includegraphics[height=150mm]{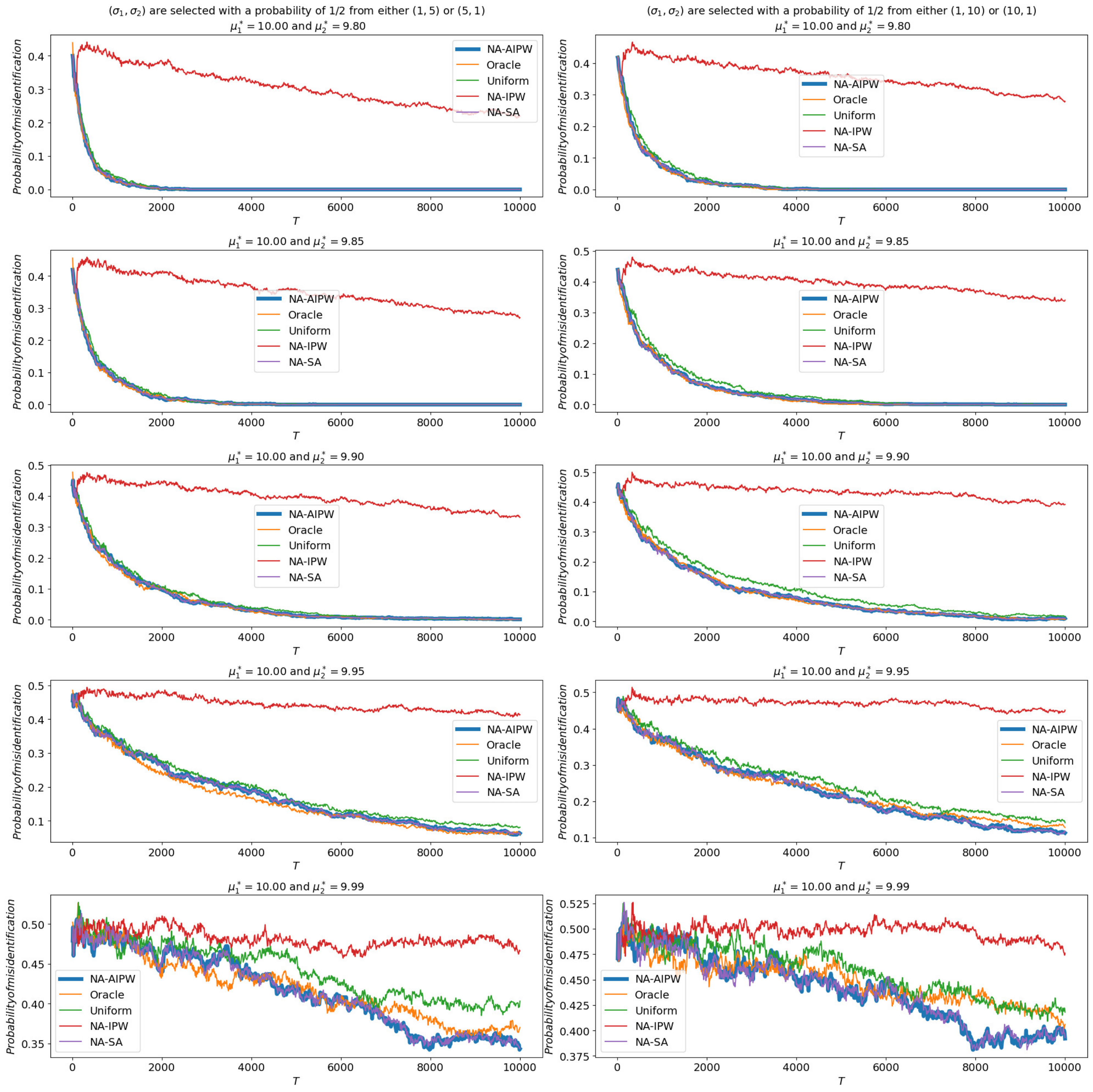}
  \caption{The results are under the first setting. We set $\mu^*_1 = 10.00$ and choose $\mu^*_2$ from the set $\{9.80, 9.85, 9.90, 9.95, 9.99\}$. The variances $(\sigma_1, \sigma_2)$ are selected with a probability of $1/2$ from either $(1, v_2)$ or $(v_2, 1)$, where $v_2$ is chosen from ${5, 10}$. We conduct $1,000$ independent trials and report the empirical probability of misidentification at $T \in \{100, 200, 300, \dots, 9,900, 10,000\}$.}
      \label{fig:res5}
\end{figure}

\begin{figure}[ht]
  \centering
  \includegraphics[height=150mm]{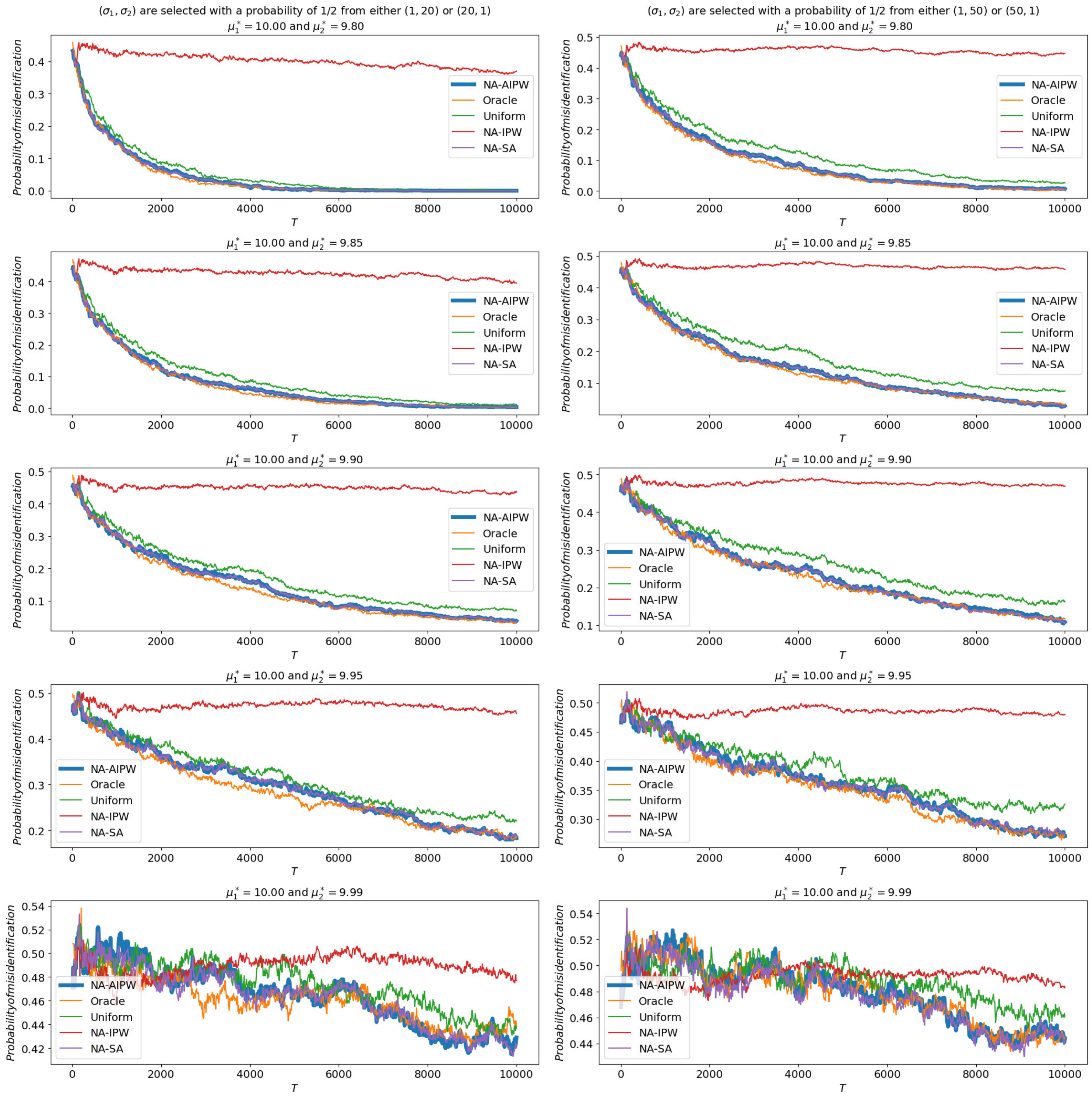}
  \caption{The results are under the first setting. We set $\mu^*_1 = 10.00$ and choose $\mu^*_2$ from the set $\{9.80, 9.85, 9.90, 9.95, 9.99\}$. The variances $(\sigma_1, \sigma_2)$ are selected with a probability of $1/2$ from either $(1, v_2)$ or $(v_2, 1)$, where $v_2$ is chosen from ${20, 50}$. We conduct $1,000$ independent trials and report the empirical probability of misidentification at $T \in \{100, 200, 300, \dots, 9,900, 10,000\}$.}
      \label{fig:res6}
\end{figure}

\end{document}